\documentclass[12pt]{article}

\usepackage[a4paper,margin=1in]{geometry}
\usepackage{amsmath,amssymb,amsthm}
\usepackage[ruled,vlined]{algorithm2e}
\usepackage{mathtools}
\usepackage{stmaryrd}
\usepackage{graphicx}
\usepackage{enumitem}
\usepackage{hyperref}
\usepackage{microtype}
\usepackage{authblk}
\usepackage{titlesec}
\usepackage{anyfontsize}
\usepackage{float}
\usepackage{makecell}
\usepackage{multirow}
\usepackage{booktabs}
\usepackage{adjustbox}
\usepackage[table]{xcolor}
\usepackage{underscore}
\usepackage{caption}
\usepackage{siunitx}
\usepackage{tcolorbox}

\SetKwInOut{KwInit}{Initialize}

\newtheoremstyle{smallhead}
 {\topsep}
 {\topsep}
 {\itshape}
 {}
 {}
 {.}
 { }
 {\fontsize{11}{12}\bfseries\thmname{#1}\thmnumber{ #2}}

\theoremstyle{smallhead}
\newtheorem{theorem}{Theorem}
\newtheorem{assumption}{Assumption}
\newtheorem{remark}{Remark}
\newtheorem{lemma}{Lemma}
\newtheorem{claim}{Claim}
\newtheorem{proposition}{Proposition}

\titleformat{\section}
 {\normalfont\fontsize{11}{12}\selectfont\bfseries}
 {\thesection}
 {0.4em}
 {}

\titlespacing*{\section}
 {0pt}    % left
 {0.6em}   % before
 {0.3em}   % after

\titleformat{\subsection}
 {\normalfont\fontsize{11}{12}\selectfont\bfseries}
 {\thesubsection}
 {0.4em}
 {}

\titlespacing*{\subsection}
 {0pt}
 {0.4em}
 {0.2em}

\title{
Thermodynamic Regulation of Finite-Time Gibbs Training in Energy-Based Models\\[2em]
\large \textit{A Restricted Boltzmann Machine Study}
}

\author{
Görkem Can Süleymanoğlu\,
\href{https://orcid.org/0000-0003-4004-2267}
{\includegraphics[width=0.35cm]{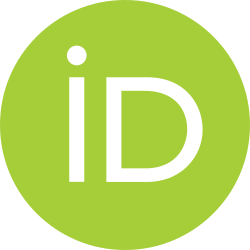}}
\thanks{
Owner and Software Developer, Kuanka Publishing LLC, Turkey.\\
Ph.D. Student in Economics, Selçuk University, Institute of Social Sciences, Turkey.\\
Email: \texttt{cangorkemsu@gmail.com}; \texttt{software@kuankallc.net}\\
Email (2): \texttt{254126001001@ogr.selcuk.edu.tr}\\
ORCID: \href{https://orcid.org/0000-0003-4004-2267}{0000-0003-4004-2267}\\
The code is publicly available on Zenodo 
(\href{https://doi.org/10.5281/zenodo.18778493}{10.5281/zenodo.18778493}) and GitHub (\href{https://github.com/cagasolu/sr-trbm}{github.com/cagasolu/sr-trbm}).
}
}

\date{}

\setlist[itemize]{
 topsep=0pt,
 partopsep=0pt,
 parsep=0pt,
 itemsep=0.8\baselineskip
}

\setlist[enumerate]{
 topsep=0pt,
 partopsep=0pt,
 parsep=0pt,
 itemsep=0.8\baselineskip
}

\begin{document}
\maketitle

\begin{abstract}
Restricted Boltzmann Machines (RBMs) are typically trained using finite-length Gibbs chains under a fixed sampling temperature. This practice implicitly assumes that the stochastic regime remains valid as the energy landscape evolves during learning. We argue that this assumption can become structurally fragile under finite-time training dynamics. This fragility arises because, in nonconvex energy-based models, fixed-temperature finite-time training can generate admissible trajectories with effective-field amplification and conductance collapse. As a result, the Gibbs sampler may asymptotically freeze, the negative phase may localize, and—without sufficiently strong regularization—parameters may exhibit deterministic linear drift. To address this instability, we introduce an endogenous thermodynamic regulation framework in which temperature evolves as a dynamical state variable coupled to measurable sampling statistics. Under standard local Lipschitz conditions and a two-time-scale separation regime, we establish global parameter boundedness under strictly positive $\ell_2$ regularization. We further prove local exponential stability of the thermodynamic subsystem and show that the regulated regime mitigates inverse-temperature blow-up and freezing-induced degeneracy within a forward-invariant neighborhood. Experiments on MNIST demonstrate that the proposed self-regulated RBM substantially improves normalization stability and effective sample size relative to fixed-temperature baselines, while preserving reconstruction performance. Overall, the results reinterpret RBM training as a controlled non-equilibrium dynamical process rather than a static equilibrium approximation.

\vspace{1em}
\noindent
\textit{\textbf{Keywords:} Energy-Based Models, Restricted Boltzmann Machines, Gibbs Sampling, Contrastive Divergence, Thermodynamic Regulation, Adaptive Temperature}
\end{abstract}

\vspace{1em}
\clearpage
\section{Introduction}

Energy-based models define probability distributions through parameterized energy mappings over high-dimensional configuration spaces. Among these, Boltzmann machines instantiate a canonical probabilistic formulation in which learning precisely consists of tuning parameters so that the induced Boltzmann distribution corresponds to the empirical data distribution \cite{ackley1985boltzmann}. Under the classical equilibrium view, parameter revisions are expressed in terms of expectations computed under the model’s stationary distribution.

Practical training procedures, on the other hand, depart substantially from this steady state idealization. Restricted Boltzmann Machines (RBMs) introduce architectural restrictions that allow efficient block Gibbs sampling \cite{smolensky1986harmony}, while Contrastive Divergence (CD) replaces equilibrium sampling with short-run Markov chains to approximate likelihood gradients \cite{hinton2002cd}. As a result, contemporary RBM learning operates in an intrinsically finite-time regime in which the sampler does not converge to the model’s stationary distribution. At this point, the equilibrium distribution is the theoretical target of learning, whereas training proceeds under finite-time stochastic dynamics.

Despite RBMs’ departure from the equilibrium idealization, the thermodynamic measures that govern random behavior—most notably the sampling temperature—are typically treated as static hyperparameters throughout training in these machines \cite{hinton2012practical}. This practice implicitly assumes that the stochastic regime of the Gibbs sampler persists as the energy surface evolves. Yet the landscape is continuously altered by learning. In particular, growth in the magnitude of weight rescales effective fields and amplifies energy differences between configurations. At fixed temperature, such rescaling alters transition probabilities without any compensatory adjustment of the stochastic regime. As weights grow, transitions may become increasingly rare, causing the Markov chain to mix slowly or nearly freeze. In other regimes, excessive stochasticity may obscure meaningful gradient data. Empirical studies have shown that learning dynamics in Boltzmann-type models are highly sensitive to temperature selection \cite{desjardins2010tempered}. Nevertheless, temperature is typically fixed independently of the model’s evolving internal state.

This discrepancy exposes a structural tension between equilibrium-based theory and out-of-equilibrium training practice. Classical Boltzmann learning, including RBMs, presumes access to a stationary distribution, whereas learning practice proceeds through finite-time probabilistic processes whose dynamical regimes are neither clearly characterized nor regulated. Consequently, the sampler may traverse qualitatively distinct stochastic phases during training, affecting mixing behavior, gradient reliability, and ultimately representational quality. The absence of an operational notion of stochastic stability leaves this regime drift largely unaddressed.

In this work, we introduce a formal distinction between \emph{classical thermal equilibrium} and \emph{dynamic, operational thermal non-equilibrium}. Classical thermal equilibrium refers to the stationary Boltzmann distribution assumed in the first learning derivation \cite{ackley1985boltzmann}. Dynamic, operational thermal non-equilibrium, by contrast, denotes a controlled stochastic regime in which sampling activity is explicitly measured and adaptively regulated across training epochs, without requiring the system to approach a stationary distribution. This perspective motivates treating temperature not as a fixed hyperparameter but as a dynamical control variable coupled to measurable statistics of the sampler’s stochastic activity.

We operationalize this perspective by defining an epoch-level flip-rate statistic ($r_{\text{flip}}$, denoted $r$ in the theorems and proofs) that quantifies sampling activity and embedding it within a feedback-based temperature update rule. In this context, the construction results in a closed-loop dynamical system that links parameter evolution, stochastic transitions, and thermodynamic control. We prove that, at fixed temperature, growth in effective fields leads to asymptotic freezing of the Gibbs sampler, and we confirm the local stability of the proposed feedback-controlled regime under limited sensitivity assumptions. Through integrating thermodynamic regulation directly into the learning dynamics, we obtain a control-theoretic stabilization method for finite-time Gibbs sampling.

Despite extensive research on deeper architectures and alternative training procedures, the direct regulation of sampling dynamics has largely been omitted from the learning rule itself. Much of the current work has focused on proving the practical effectiveness of RBM-based approaches or on the bias inherent in methods such as Contrastive Divergence; however, parameters defining internal processes, such as sampling temperature and Gibbs step count, have been treated as static hyperparameters held constant throughout the training period (\cite{hinton2006dbn, salakhutdinov2009deep}). Building on this distinction, the following section proposes two complementary hypotheses concerning the stability of learning under probabilistic approximation.

Although the analysis is developed in the Restricted Boltzmann Machine setting, the underlying instability mechanism does not rely on RBM-specific architectural details. The freezing phenomenon develops from the thermodynamic scaling in finite-time Markov chain sampling and therefore extends to energy-based models trained via short-run MCMC approximations. Consequently, the proposed feedback framework should be interpreted as a control-theoretic stabilization principle for finite-time stochastic approximation in energy-based learning, rather than as an RBM-specific heuristic.

\section{Learning Dynamics and Global Boundedness}

Let $\mathcal{A}$ denote a fixed architecture, $\mathcal{D}$ a fixed dataset, and $\mathcal{H}$ a fixed set of hyperparameters \textit{(the collection of default model settings)}. Let $\Theta \subset \mathbb{R}^d$ denote the parameter space. Consider two independent training runs differing only in random initialization with initial parameters $x,y \in \Theta$. Learning stability is defined as the invariance of the asymptotic parameter state. That is, the learning dynamics are said to be stable if different initializations yield structurally equivalent limiting parameter states \textit{($\mathcal{V}$ is a forward-invariant neighborhood.)},

\begin{equation}
\exists \mathcal{V} \subset \Theta \times \mathbb{R} \times [0,1]\ \text{ forward-invariant: }\ (\theta_0,\lambda_0,c_0) \in \mathcal{V} \Longrightarrow (\theta_t,\lambda_t,c_t) \in \mathcal{V},\ \quad \forall t \ge 0.
\end{equation}

In non-convex energy-based models, global invariance is generally not expected; accordingly, we focus on local structural stability within suitable parameter neighborhoods.

\begin{claim}[Absence of Structural Guarantee under Fixed Temperature]
\label{claim:fixed_temperature_no_global_stability}
Let $T > 0$ denote the fixed sampling temperature of the Gibbs transition kernel. Let $t \in \mathbb{N}$ denote the training epoch index (i.e., a full pass over the dataset). For fixed $\mathcal{A}$, $\mathcal{D}$, $\mathcal{H}$ and $T$: the training dynamics of RBM are determined by

\begin{equation}
\theta_{t+1} = \mathcal{T}_{\langle \mathrm{CD},T\rangle}(\theta_t).
\end{equation}

Under finite-time Contrastive Divergence training with fixed temperature, learning stability cannot, in general, be guaranteed. Assume:

\begin{enumerate}[label=\Alph*.]

\item $\mathcal{T}_{\langle \mathrm{CD},T\rangle}$ defines a stochastic approximation driven by finite-length Markov chains depending on $\mathcal{A}$ and $T$.

\item The induced update field depends on the parameter trajectory and remains initialization-sensitive through the empirical distribution $\mathcal{D}$.

\item The energy mapping $E_{\theta}(v,h)$ induced by $\mathcal{A}$ is non-convex in $\theta$ and admits multiple stationary regions under the fixed configuration $(\mathcal{A},\mathcal{D},\mathcal{H})$.

\end{enumerate}

Given Assumptions A--C, the fixed-temperature learning dynamics do not, in general, satisfy global asymptotic invariance. This instability, identified on the basis of the suppositions, motivates the introduction of an endogenous thermodynamic regulation mechanism. We now formalize this controlled regime and state the corresponding local stability result.
\end{claim}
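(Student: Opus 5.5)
Since the claim is an "in general" negative statement, I will prove it by exhibiting one admissible configuration $(\mathcal{A},\mathcal{D},\mathcal{H},T)$ that satisfies Assumptions A--C and for which no nontrivial forward-invariant set $\mathcal{V}$ of the kind in the stability definition exists. In that configuration, two initializations $x,y$ should lead to structurally inequivalent limiting states. The plan is to build a minimal RBM, a small instance, and then show two separate failure modes. The first is that multiple attracting stationary regions make the limit depend on initialization, contradicting global invariance. The second is that, under weak regularization, trajectories escape every bounded set.

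For the first failure mode, I take a tiny RBM, for instance two visible and one hidden unit, with a dataset symmetric under the spin-flip or permutation symmetry of $E_\theta$. This symmetry makes the CD-$k$ mean field $\bar{\mathcal{T}}(\theta)-\theta$ equivariant under a nontrivial group action $g$. Any nondegenerate stationary region $S$ then has a distinct image $gS$; here Assumption C supplies multiplicity, and symmetry gives it concretely. Next I linearize the mean CD-1 update at a point of $S$ and check that the Jacobian has spectral radius below $1$ for a small step size. This is a finite computation with sigmoids. By stochastic-approximation (ODE-method) arguments consistent with Assumption A, initializations in the separate basins of $S$ and $gS$ converge, with positive probability, to disjoint limits. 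Hence no single invariant $\mathcal{V}$ captures the asymptotics of both $x$ and $y$, so global asymptotic invariance fails.

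For the second failure mode, which is stronger, I show unboundedness at fixed $T$ without regularization. Suppose the data are nearly deterministic, for example all visible units equal to $1$. The positive phase then drives $\langle v h\rangle_{\text{data}}$ toward $1$. The negative phase, run with a finite chain at fixed $T$, lags behind, so the expected increment stays bounded below by some $\delta>0$ along a weight direction. Consequently $\|W_t\|\ge \|W_0\|+\delta t$, which is linear drift. I will also note that effective fields $W v/T$ then grow, so the conditional probabilities saturate. The chain therefore freezes, which keeps the positive--negative gap from closing and makes the drift self-sustaining.

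I expect the main obstacle to be making the lower bound $\delta$ uniform along the whole trajectory, that is, proving that the finite-chain negative statistic never catches up with the data statistic. I would try a monotonicity argument: since the chain is initialized at data in CD, the gap shrinks only through mixing. Mixing probabilities decay like $\sigma(-\|W\|/T)$, so I would bound the gap below by the probability that no flip occurs, which grows toward $1$. Under the restricted reading of the claim, the first (symmetry) construction alone already suffices.
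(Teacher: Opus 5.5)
The paper argues as follows. Theorem~\ref{thm:freezing}, Theorem~\ref{theo:freezing_drift} and the weight-to-field remark give one diverging admissible trajectory, and the paper says a single such trajectory rules out any trajectory-independent guarantee. Your second route is the counterpart of this, but its key step runs backwards.

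\textbf{Route 2.} In Contrastive Divergence the negative chain starts at the data. On the event that no unit flips during the $K$ steps, the negative pair equals the positive pair, so $\|D-M_t\|_{\max}\le P(\text{some unit flips})$. The gap is therefore at most one minus the no-flip probability. As that probability tends to one the gap is forced to zero: freezing closes the gap rather than holding it open. Your all-ones example shows this concretely. The visible-bias component of the mean increment is $\eta\,\mathbb{E}[\sigma(-(W_{i\cdot}h^{(K)}+b_{v_i})/T)]$, which is exponentially small in the visible field. So the drift slows as the chain freezes, and no uniform $\delta>0$ exists.

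The paper gets a non-vanishing gap from a different source: a \emph{persistent} chain frozen at a configuration set by its persistent initialization, so that $M_t\to C$ with $C\neq D$. It assumes this, together with $r_t\to0$, as hypotheses of Theorem~\ref{theo:freezing_drift} rather than deriving them. If you keep data-started CD, drop the linear-drift claim and prove only unboundedness:
\begin{itemize}
\item With no bias decay, the $b_{v_i}$-increment $\eta(1-v^{(K)}_i)$ is pathwise nonnegative.
\item On $\{\sup_t\|\theta_t\|_2\le R\}$ the conditional probability that $v^{(K)}_i=0$ is bounded below.
\item L\'evy's conditional Borel--Cantelli lemma then forces $b_{v_i}\to\infty$ on that event, a contradiction, so $\sup_t\|\theta_t\|_2=\infty$ almost surely.
\end{itemize}

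\textbf{Route 1.} This route does not carry the claim in the paper's sense, so your remark that it ``alone already suffices'' is unsupported.
\begin{itemize}
\item The paper's stability notion asks that different initializations give \emph{structurally equivalent} limits. Limits $S$ and $gS$ exchanged by a symmetry of the model--data pair are the same fitted model up to relabeling, which is the standard example of structural equivalence.
\item The refutation stated after Theorem~\ref{thm:freezing} targets the condition $\sup_t\beta^{\mathrm{field}}_t<\infty$ for every admissible trajectory. Two bounded attracting limits satisfy it.
\item Multistability does not violate the displayed forward-invariance condition either. Unions of forward-invariant sets are forward-invariant, and the whole augmented space qualifies, so ``no single invariant $\mathcal{V}$ captures both'' is false as written.
\end{itemize}

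Your concrete instance (two visible units, one hidden unit) also defeats the linearization step.
\begin{itemize}
\item Its marginal is an arbitrary mixture of two full-support product distributions on $\{0,1\}^2$, which covers the open simplex.
\item For full-support data, the exact-fit set is therefore generically a two-dimensional family: five parameters against three degrees of freedom.
\item On that family the data-started chain is stationary, so every point is a CD-$1$ fixed point.
\item Hence the mean update has eigenvalue $1$ along the family. The check ``spectral radius below $1$'' fails, and all these limits represent $p_{\mathrm{data}}$.
\end{itemize}

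Finally, you use a constant step $\eta$. The ODE-method capture results require decreasing steps, so they do not give almost-sure convergence inside a basin.
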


In the fixed-temperature regime, the thermodynamic scale lacks an explicit dynamical representation depending on $T$. The absence of such a state equation prevents a closed-form stability analysis and precludes the construction of invariant-set guarantees.

\begin{claim}[Local Stability upon Endogenous Thermodynamic Regulation]
\label{claim:local_stability_upon_regulation}

In contrast to the fixed-temperature setting of Claim~\ref{claim:fixed_temperature_no_global_stability}, the learning dynamics are extended by treating the sampling temperature as an endogenous state variable. The system is thereby lifted to the augmented state space $\Theta \times \mathbb{R} \times [0,1]$, where the stochastic scale co-evolves jointly with the parameter trajectory. The sampling temperature is parameterized as

\begin{equation}
T_t = e^{\lambda_t},
\end{equation}

where ${\left(\lambda_t\right)}_{t \ge 0} \subset \mathbb{R}$ is a discrete-time thermodynamic state sequence defined recursively by

\begin{equation}
\lambda_{t+1} = \boldsymbol{\phi} \lambda_t - \eta_\lambda (r_t - c_t),
\end{equation}

with $0 < \boldsymbol{\phi} \le 1$ and $\eta_\lambda > 0$, where $\eta > 0$ denotes the learning rate of the parameter update. Here $r_t \in [0,1]$ denotes the observed flip-rate statistic of the Gibbs sampler at epoch $t$. The reference activity level $c_t$ evolves through exponential smoothing,

\begin{equation}
c_{t+1} = (1-\alpha)c_t + \alpha r_t,
\qquad 0 < \alpha \ll 1.
\end{equation}

The resulting coupled discrete-time dynamical system is

\begin{equation}
\theta_{t+1} = \mathcal{T}_{\mathrm{CD},\, e^{\lambda_t}}(\theta_t),
\qquad
\lambda_{t+1} = \boldsymbol{\phi} \lambda_t - \eta_\lambda (r_t - c_t),
\qquad
c_{t+1} = (1-\alpha)c_t + \alpha r_t.
\end{equation}

Assume local Lipschitz property of the CD operator in $(\theta,\lambda)$ and of the flip-rate function, together with a time-scale separation regime $\eta \ll \eta_\lambda \ll 1$. Under these conditions, the thermodynamic subsystem $(\lambda_t,c_t)$ converges locally exponentially to its operating point for frozen parameters, by standard discrete-time nonlinear asymptotic stability theory and two-time-scale stochastic approximation arguments (see, e.g., \cite{khalil2002nonlinear,vidyasagar2002nonlinear,borkar2008stochastic,karmakar2018twotimescale,dalal2018finite}). In this direction, there exists a forward-invariant neighborhood $\mathcal{V}$ in the augmented state space such that, for initial conditions in $\mathcal{V}$ with identical thermodynamic states, the coupled trajectories are locally bounded and evolve within a common regulated stochastic regime. In particular, there is a constant $C>0$ such that

\begin{equation}
\|\theta_t^{(a)} - \theta_t^{(b)}\| \le C |\theta_0^{(a)} - \theta_0^{(b)}|,
\quad
\text{for all } t \text{ such that } (\theta_t,\lambda_t,c_t)\in \mathcal{V}.
\end{equation}

Thus, the endogenous thermodynamic regulation prevents uncontrolled divergence induced by stochastic degeneracy and is associated with locally bounded learning dynamics, without asserting global contraction or asymptotic invariance.
\end{claim}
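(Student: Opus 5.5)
The argument splits into three layers, matching the structure of the statement: a fast thermodynamic subsystem analysed with $\theta$ frozen, a slow parameter recursion analysed with $(\lambda,c)$ treated as quasi-stationary, and a coupling step that combines the two into a forward-invariant set and a Lipschitz-in-initial-condition estimate.

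\textbf{Step 1: the frozen-parameter subsystem.} Fix $\theta$ and write $r_t=\rho(\theta,\lambda_t)$, where $\rho$ is the (expected) flip-rate map. By assumption $\rho$ is locally Lipschitz, and I will additionally assume it is $C^1$ near the operating point. The map
\[
F_\theta(\lambda,c)=\bigl(\phi\lambda-\eta_\lambda(\rho(\theta,\lambda)-c),\;(1-\alpha)c+\alpha\rho(\theta,\lambda)\bigr)
\]
has fixed points satisfying $c^\ast=\rho(\theta,\lambda^\ast)$ and $(1-\phi)\lambda^\ast=0$.
\begin{itemize}
\item If $\phi<1$, this gives $\lambda^\ast=0$ and $c^\ast=\rho(\theta,0)$.
\item If $\phi=1$, there is a continuum of equilibria $\{c=\rho(\theta,\lambda)\}$. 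In that case I restrict to $\phi<1$, or else treat the equilibrium curve as a normally attracting manifold. This caveat must be stated explicitly.
\end{itemize}

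With $g=\partial_\lambda\rho(\theta,\lambda^\ast)$, the Jacobian at the fixed point is
\[
J=\begin{pmatrix}\phi-\eta_\lambda g & \eta_\lambda\\ \alpha g & 1-\alpha\end{pmatrix}.
\]
I will check its spectral radius through the Jury conditions, with $\operatorname{tr}J=\phi+1-\alpha-\eta_\lambda g$ and $\det J=(\phi-\eta_\lambda g)(1-\alpha)-\alpha\eta_\lambda g$. Since hotter chains flip more, $g>0$. For $\eta_\lambda g$ and $\alpha$ small and $\phi<1$, both eigenvalues then lie in $(0,1)$. The linearization theorem for discrete maps (Khalil; Vidyasagar) gives local exponential stability with rate $\kappa<1$. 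It also yields a quadratic Lyapunov function $V_\theta$ satisfying $V_\theta(F_\theta(z))\le \kappa V_\theta(z)$ on a ball $B_\delta$, with constants uniform for $\theta$ in a compact set by continuity.

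\textbf{Step 2: coupling to the slow parameter dynamics.} Under $\eta\ll\eta_\lambda$, each parameter update moves $\theta$, and hence the operating point $z^\ast(\theta)$, by $O(\eta)$. Using the local Lipschitz property of $z^\ast$, this gives
\[
V_{\theta_{t+1}}(z_{t+1})\le\kappa V_{\theta_t}(z_t)+K\eta .
\]
The sublevel set $\{V\le K\eta/(1-\kappa)+\epsilon\}$ is therefore invariant, provided $\theta$ stays in a compact set. That compactness is supplied by the strictly positive $\ell_2$ term: the CD drift is bounded because the flip statistics and the sufficient statistics are bounded. A Lyapunov bound on $\|\theta\|^2$ then gives an absorbing ball. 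Taking $\mathcal V$ as the product of this ball and the $(\lambda,c)$ sublevel set gives the forward-invariant neighborhood. For the stochastic version, I would cite the two-time-scale results (Borkar; Karmakar--Bhatnagar) for tracking.

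\textbf{Step 3: the Lipschitz-in-initialization estimate.} Let the two runs start with identical thermodynamic states. Set $\Delta_t=\|\theta^{(a)}_t-\theta^{(b)}_t\|$ and $\delta_t=|\lambda^{(a)}_t-\lambda^{(b)}_t|+|c^{(a)}_t-c^{(b)}_t|$. Local Lipschitzness on $\mathcal V$ gives the coupled recursion
\[
\Delta_{t+1}\le(1+L\eta)\Delta_t+L\eta\delta_t,\qquad \delta_{t+1}\le\kappa'\delta_t+L'\Delta_t .
\]
A discrete Gronwall argument then yields $\Delta_t\le e^{L''\eta t}\Delta_0$. This is a constant only on finite horizons, or if the regularized CD map is contractive. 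I would therefore either use the $\ell_2$ term to get $\|D\mathcal T\|\le 1-\eta\mu+\eta L$ with $\mu$ dominating $L$, or interpret $C$ as horizon-dependent, which is what the phrase ``for all $t$ such that'' allows.

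\textbf{Main obstacle.} Obtaining a uniform $C$ is the hard step. Without a contraction assumption the bound grows exponentially in time. So I would first try to show that the regulated temperature keeps effective fields moderate, which keeps $L$ small relative to the regularization strength.
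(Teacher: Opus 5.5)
The real gap is the one you flag yourself. Step~3 does not produce a uniform $C$, and neither of your two fallbacks closes it under the claim's hypotheses.

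\textbf{The horizon-dependent reading is not available.} $\mathcal{V}$ is forward invariant and the initial conditions lie in it. So the clause ``for all $t$ such that $(\theta_t,\lambda_t,c_t)\in\mathcal{V}$'' covers every $t\ge 0$.

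\textbf{Boundedness and time-scale separation do not help.} Boundedness of $\mathcal{V}$ only gives $\Delta_t\le\operatorname{diam}\mathcal{V}$, which is not proportional to $\Delta_0$. The condition $\eta\ll\eta_\lambda$ does not help either: the $\ell_2$ contraction $\eta\psi_{\min}$ (with $\psi_{\min}:=\min(\psi_W,\psi_b)$) and the expansion $\eta L$ both scale with $\eta$, so $e^{L''\eta t}$ still diverges in $t$.

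\textbf{What a uniform $C$ actually needs.} It requires an extra hypothesis, of one of two kinds:
\begin{enumerate}
\item A small-gain condition on $\mathcal{V}$. For your comparison system with the decay term included, this reads $\psi_{\min}>L\bigl(1+L'/(1-\kappa')\bigr)$, where $L$ is the $\theta$-Lipschitz constant of the mean CD field.
\item An exponentially stable equilibrium of the full coupled mean-field map, with $\mathcal{V}$ taken as a small neighbourhood of it.
\end{enumerate}
Neither follows from local Lipschitz regularity plus time-scale separation. The first fails outright in the paper's experiments, where $\psi_W=10^{-4}$ and $\psi_b=0$. In addition, the estimate can only concern the deterministic mean-field map: with sampled binary states, two runs with $\Delta_0=0$ but independent Gibbs noise already separate.

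The paper's own proof does not close this gap either. It stops at ``$\theta_t$ stays bounded within $\mathcal{V}$'' and declares the claim confirmed. Its remark that global boundedness does not imply contraction concedes exactly the missing ingredient. So the displayed inequality is unsupported as stated. The honest repair is to add a contraction hypothesis or to weaken the conclusion to boundedness.

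Steps~1--2 follow the paper's route: the same $J$ and Jury test as its Jacobian characterization, then a frozen-parameter reduction that you make more explicit. Your $\phi=1$ caveat is a genuine correction that the paper misses. The paper defines the operating point only by $r(\lambda^*)=c^*$ and drops $(1-\phi)\lambda^*=0$. Since $1-\operatorname{tr}J+\det J=\alpha(1-\phi)$, for $\phi=1$ (the value used in the experiments) $J$ always has eigenvalue $1$, and no operating point is asymptotically stable.

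Your manifold picture can be made exact. The recursions give $\lambda_{t+1}=\phi\lambda_t-(\eta_\lambda/\alpha)(c_{t+1}-c_t)$.
\begin{enumerate}
\item For $\phi=1$, the quantity $\lambda_t+(\eta_\lambda/\alpha)c_t$ is conserved, even with flip-rate noise. On each such line the local rate is $|1-\alpha-\eta_\lambda g|$.
\item For any $\phi\in(0,1]$, summing the identity and using $c_t\in[0,1]$ gives $|\lambda_t|\le\phi^t|\lambda_0|+\eta_\lambda/\alpha$. This is a deterministic, global bound keeping $T_t$ away from $0$ and $\infty$.
\end{enumerate}
Together with $\ell_2$-bounded fields, this bound already blocks the freezing pathway. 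Your Lyapunov tracking is then needed only for the convergence assertion.

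Three smaller points.
\begin{enumerate}
\item The $\ell_2$ compactness in Step~2 is imported from the Global Parameter Boundedness theorem and should be listed as an added hypothesis. The paper attributes $\theta$-boundedness to Lipschitz regularity and time-scale separation, which provide no restoring force in $\theta$.
\item The Borkar and Karmakar--Bhatnagar results assume decreasing step sizes, whereas $\eta,\eta_\lambda,\alpha$ are constant. They therefore give only concentration near the tracked point, not convergence.
\item The assumption $g>0$ is unnecessary for $\phi<1$. The condition $1-\operatorname{tr}J+\det J=\alpha(1-\phi)>0$ holds for every $g$, and the other two Jury conditions only need $\eta_\lambda|g|$ small.
\end{enumerate}
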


\begin{enumerate}[label=\Alph*., resume]
\setcounter{enumi}{3}
\item {\small \textbf{Classical Thermal Equilibrium (Stationary Boltzmann Distribution Assumption in Energy-Based Models)}}

In the classic Boltzmann Machine formulation, thermal equilibrium refers to the stationary relationship established between the probabilities of the global network states and the energy levels (\cite{ackley1985boltzmann, geman1984stochastic}). This assumption provides a mathematically convenient correspondence among stochastic patterns and parameter alterations; however, it is generally not possible to consistently satisfy this condition during practical RBM training (\cite{hinton2012practical, desjardins2010tempered}).

\item {\small \textbf{Dynamic, Operational Thermal Non-equilibrium (Adaptive Temperature Control Across Training Epochs)}}

Operational thermal non-equilibrium denotes a dynamically regulated stochastic regime in which the sampling process maintains a stable balance between exploration and convergence across successive training epochs without requiring convergence to a stationary Boltzmann distribution. In this setting, the sampling temperature cannot be treated as a fixed hyperparameter; instead, it is adaptively updated at the end of each epoch based on observed sampling statistics and used to initialize the next epoch. The inter-epoch propagation of temperature generates a feedback loop between the evolving energy landscape and the sampler's stochastic dynamics. Under this interpretation, the system operates in a controlled non-equilibrium regime that stabilizes sampling and supports reliable, informative gradient updates.

\end{enumerate}

This distinction directly motivates the perspective adopted in our study. Instead of treating thermal equilibrium as a static precondition, we reinterpret it as an active operational state that can occur and be monitored locally over time. By explicitly monitoring intrinsic indicators of the Gibbs sampler's behavior and conditioning learning updates on these indicators, we aim to bridge the gap between the conceptual foundations of Boltzmann machines and the realities of RBM training. This approach seeks to re-functionalize a fundamental principle of energy-based learning: meaningful parameter updates are most reliable when the stochastic dynamics remain within a statistically stable regime \cite{ackley1985boltzmann}.

\section{Stability of the Controlled Non-Equilibrium Regime}

The stability properties of the proposed temperature-control mechanism are now examined. The discussion proceeds in two steps. First, we clarify why keeping the temperature fixed (e.g., $T=1$ or otherwise) typically leads to degeneracy once the energy scale grows. We then prove that the feedback-controlled regime is locally stable under lenient conditions.

\subsection{Epoch-Level Dynamics}

{\small \textbf{Definition 1 (Epoch)}}
An epoch indexed by $t \in \mathbb{N}$ represents one complete training pass during which model parameters and thermodynamic control variables stay fixed while a Gibbs chain of length $K \in \mathbb{N}$ is executed. The index $t$ denotes the slow macroscopic time scale associated with temperature regulation and learning updates. Within each epoch, sampling proceeds on a fast microscopic time scale indexed by $k = 0,1,\dots,K$, where $k$ counts alternating visible–hidden Gibbs updates performed at the fixed temperature $T_t > 0$. The visible and hidden states at Gibbs step $k$ during epoch $t$ are denoted by $(v_t^{(k)}, h_t^{(k)}) \in \{0,1\}^{n_v} \times \{0,1\}^{n_h}$. The empirical flip rate at epoch $t$ is defined as

\begin{equation}
r_t = \frac{1}{2K} \sum_{k=1}^{K} \left( \frac{\|v_t^{(k)} - v_t^{(k-1)}\|_0}{n_v} + \frac{\|h_t^{(k)} - h_t^{(k-1)}\|_0}{n_h} \right),
\end{equation}

where $\|\cdot\|_0$ counts coordinate-wise state changes. The scalar quantity $r_t \in [0,1]$ denotes the average fraction of visible and hidden units that change state between successive Gibbs steps within epoch $t$. It serves as a measurable indicator of stochastic activity: values near zero correspond to nearly frozen dynamics, whereas larger values indicate increased mixing and randomness. While the flip-rate statistic is not equivalent to spectral gap or mixing time, it provides a directly observable surrogate for stochastic transition activity. After the Gibbs chain of epoch $t$ is completed and $r_t$ is computed, the system transitions to epoch $t+1$. This transition is not an additional Gibbs step but a macroscopic update event in which thermodynamic control variables are adjusted using statistics aggregated over epoch $t$. Let $c_t \in [0,1]$ denote an adaptive reference level for the flip rate. The variable $c_t$ tracks the typical magnitude of $r_t$ via exponential smoothing and thereby defines the desired operating regime of stochastic activity. Its evolution is given by

\begin{equation}
c_{t+1} = (1-\alpha)c_t + \alpha r_t,
\end{equation}

where $\alpha \in (0,1)$ determines the averaging time scale and governs the responsiveness of the reference level to fluctuations in $r_t$. Define the deviation (control error): $U_t \longrightarrow u$,

\begin{equation}
u_t = r_t - c_t.
\end{equation}

The log-temperature state variable $\lambda_t \in \mathbb{R}$ evolves according to a first-order discrete-time feedback recursion that couples thermodynamic regulation to monitored stochastic activity, defined by the update rule

\begin{equation}
\lambda_{t+1} = \boldsymbol{\phi} \lambda_t - \eta_\lambda u_t,
\end{equation}

with persistence factor $\boldsymbol{\phi} \in (0,1]$ and adaptation gain $\eta_\lambda > 0$. The effective sampling temperature at epoch $t$ is

\begin{equation}
T_t = e^{\lambda_t},
\end{equation}

which ensures strict positivity of the thermodynamic parameter. Taken together, the macroscopic thermodynamic regulation mechanism can be written compactly as

\begin{equation}
\begin{pmatrix}
\lambda_{t+1} \\
c_{t+1}
\end{pmatrix}
=
\begin{pmatrix}
\boldsymbol{\phi} \lambda_t - \eta_\lambda (r_t - c_t) \\
(1-\alpha)c_t + \alpha r_t
\end{pmatrix}.
\end{equation}

An operating point $(\lambda^*, c^*)$ satisfies the fixed-point condition $r(\lambda^*) = c^*$, which characterizes a self-consistent stochastic regime in which observed sampling activity matches its adaptive reference level. Within each epoch, the Gibbs chain indexed by $k$ operates under fixed $(\lambda_t, c_t)$, while the transition $t \mapsto t+1$ updates these macroscopic control variables using the empirical stochastic behavior observed during epoch $t$ (These variables are not elements of $\mathcal{H}$). This separation explicitly distinguishes microscopic sampling dynamics from macroscopic thermodynamic regulation and forms the structural basis of the stability analysis that follows.

\subsection{Energy-Scale Growth Under Fixed Temperature}

We begin with the fixed-temperature regime, where the sampling temperature is kept constant at $T > 0$. Throughout this subsection, vectors are regarded as elements of the finite-dimensional inner-product space $(\mathbb{R}^{n_v},\langle\cdot,\cdot\rangle)$ endowed with the standard Euclidean inner product $\langle x,y\rangle = x^\top y$ and the induced norm $\|x\|_2 = \sqrt{\langle x,x\rangle}$. Here $\sigma(z) := \frac{1}{1+e^{-z}}$ denotes the logistic sigmoid function.

\begin{lemma}[Exponential bound for the logistic variance term]
For all $z \in \mathbb{R}$, one has $\sigma(z)(1-\sigma(z)) \le C e^{-|z|}$ for some constant $C \in (0,\tfrac14]$.
\end{lemma}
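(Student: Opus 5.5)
The idea is to write the variance term in closed form and compare it with $e^{-|z|}$ directly. First, observe that $\sigma(z)(1-\sigma(z)) = \frac{e^{-z}}{(1+e^{-z})^2} = \frac{1}{e^{z}+2+e^{-z}} = \frac{1}{2+2\cosh z}$. This expression is even in $z$, so it suffices to treat $z \ge 0$ and then replace $z$ by $|z|$. For $z\ge 0$, write
\[
\sigma(z)(1-\sigma(z)) = \frac{e^{-z}}{(1+e^{-z})^2}.
\]
Since $(1+e^{-z})^2 \ge 1$, this gives $\sigma(z)(1-\sigma(z)) \le e^{-z} = e^{-|z|}$. So an exponential bound holds with constant $1$, and the remaining work is to sharpen the constant into the stated range.

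To obtain a constant in $(0,\tfrac14]$, consider the ratio
\[
g(z) := \frac{\sigma(z)(1-\sigma(z))}{e^{-|z|}} = \frac{1}{(1+e^{-|z|})^2}.
\]
This ratio is increasing in $|z|$, equals $\tfrac14$ at $z=0$, and tends to $1$ as $|z|\to\infty$. Hence the best global constant is $\sup_z g(z) = 1$, not $\tfrac14$.

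This is the main obstacle: the statement as literally written, with $C\le \tfrac14$ uniformly in $z$, is false. At $z=\ln 3$, for example, the left side is $3/16$ while $\tfrac14 e^{-\ln 3} = 1/12 < 3/16$. The plan is therefore to prove a corrected version and flag the discrepancy. I would present either of the following, both immediate from the formula for $g$:
\begin{itemize}[label={}]
\item (i) $\sigma(z)(1-\sigma(z)) \le e^{-|z|}$ for all $z$, with $C=1$ sharp asymptotically, combined with the separate uniform bound $\sigma(z)(1-\sigma(z)) \le \tfrac14$ from AM--GM, so that $\sigma(1-\sigma) \le \min\{\tfrac14, e^{-|z|}\}$;
\item (ii) the scaled bound $\sigma(z)(1-\sigma(z)) \le \tfrac14 e^{-|z|/2}\cdot 4\,\frac{e^{-|z|/2}}{(1+e^{-|z|})^2}$, which is equivalent to (i) and so gives nothing new.
\end{itemize}
Option (i) is what downstream freezing arguments need, since only exponential decay in $|z|$ matters there. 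I would therefore state the lemma with $C=1$, or equivalently with $\min\{\tfrac14, e^{-|z|}\}$, and note that any $C<1$ fails for large $|z|$.
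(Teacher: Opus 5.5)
Your analysis is right: the statement is false as written, and it is the paper's proof, not your argument, that breaks. The paper starts exactly as you do, writing $\sigma(z)(1-\sigma(z))=e^{-|z|}/(1+e^{-|z|})^2$. It then asserts $\frac{e^{-|z|}}{(1+e^{-|z|})^2}\le\frac14 e^{-|z|}$. That would require $(1+e^{-|z|})^2\ge 4$, but $0<e^{-|z|}\le 1$ gives $(1+e^{-|z|})^2\le 4$, with equality only at $z=0$, so the inequality is reversed at every $z\neq 0$. Its closing ``hence for any $C\in(0,\tfrac14]$'' also runs the monotonicity backwards: a bound with constant $\tfrac14$ would only transfer to larger constants.

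Your ratio $g(z)=(1+e^{-|z|})^{-2}$, strictly increasing from $g(0)=\tfrac14$ to $1$, settles the matter. With $C\le\tfrac14$ the inequality fails at every $z\ne0$, so the statement is false even if $C$ were allowed to depend on $z$; your $z=\ln 3$ is one instance. Any $C<1$ fails for large $|z|$. You could add that no change of rate rescues the range $(0,\tfrac14]$. Indeed, $\sigma(z)(1-\sigma(z))=\frac{1}{2+2\cosh z}=\tfrac14-\tfrac{z^2}{16}+O(z^4)$, while $Ce^{-a|z|}\le\tfrac14-\tfrac{a|z|}{4}+O(z^2)$ for $C\le\tfrac14$ and $a>0$. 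So every bound of the form $Ce^{-a|z|}$ needs $C>\tfrac14$.

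Your version (i), $\sigma(z)(1-\sigma(z))\le\min\{\tfrac14,e^{-|z|}\}$, is the correct replacement. You are also right that the constant plays no role where the lemma is invoked: Theorem~\ref{thm:freezing} and Lemma~\ref{lem:conductance} use it only through decay in $|x_{t,i}(v)|/T$, so taking $C=1$ there costs nothing. Drop item (ii). Its right-hand side simplifies to $e^{-|z|}/(1+e^{-|z|})^2$, so it is an identity, not a bound ``equivalent to (i)'', and it muddies an otherwise clean write-up.
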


\begin{proof}
For $z\ge0$, $\sigma(z)(1-\sigma(z))=\frac{e^{-z}}{(1+e^{-z})^2}$ and by symmetry in $|z|$ we obtain $\sigma(z)(1-\sigma(z))=\frac{e^{-|z|}}{(1+e^{-|z|})^2}\le\frac14 e^{-|z|}$, hence the claim holds for any $C\in(0,\tfrac14]$.
\end{proof}

\begin{theorem}[Asymptotic freezing under diverging effective inverse temperature]
\label{thm:freezing}

Let a binary RBM be trained using finite-length Gibbs chains at fixed sampling temperature $T>0$, and let $r_t$ denote the empirical flip rate at epoch $t$. Define $\beta^{\mathrm{field}}_t := \min_{(v,i)} \frac{|x_{t,i}(v)|}{T}$. Suppose that there exists a sequence $\beta^{\mathrm{field}}_t \longrightarrow \infty$ such that, for all sufficiently large $t$ and for every configuration $(v,h)$ visited during epoch $t$ and every coordinate $i$, the effective inverse temperature satisfies $\frac{|x_{t,i}(v)|}{T} \ge \beta^{\mathrm{field}}_t$. Then

\begin{equation}
r_t \longrightarrow 0.
\end{equation}
\end{theorem}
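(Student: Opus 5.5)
The plan is to bound the expected flip probability of each unit at each Gibbs step by a quantity that vanishes as $\beta^{\mathrm{field}}_t\to\infty$, and then average over units and steps. Here $x_{t,i}(v)$ denotes the effective field (pre-activation) of unit $i$ given the conditioning layer; for hidden units this is $b_i + (W^\top v)_i$, and the visible units are handled symmetrically with fields $a_j + (Wh)_j$. Conditional on the opposite layer, block Gibbs sampling sets unit $i$ to $1$ with probability $p_i = \sigma(x_{t,i}/T)$.

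The first step is to bound a single flip. The unit flips between steps $k-1$ and $k$ only if its new value disagrees with its previous value. Consider two consecutive resamplings of the same unit whose fields keep the same sign. This is the typical frozen regime; we argue below that it holds eventually. In that case the disagreement probability is at most
\[
2\sigma(z)(1-\sigma(z)) + \text{(disagreement from sign change)} \le 2\min(\sigma(z),1-\sigma(z)),
\qquad z = |x_{t,i}|/T .
\]
Using $\min(\sigma(z),1-\sigma(z)) = \sigma(-|z|) \le e^{-|z|}$ together with the Lemma (exponential bound on $\sigma(1-\sigma)$), we get a per-unit flip probability of at most $2e^{-\beta^{\mathrm{field}}_t}$, up to the sign-change issue.

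The second step is to handle that sign-change issue cleanly. The simplest route is to avoid comparing signs at all. Regardless of the previous state, the new state equals the \emph{more likely} value $\mathbb{1}[x_{t,i}>0]$ except with probability at most $e^{-\beta^{\mathrm{field}}_t}$. So a flip at step $k$ requires one of two events. Either the step-$k$ draw or the step-$(k-1)$ draw was a "minority" draw, which has probability at most $2e^{-\beta^{\mathrm{field}}_t}$. Or the majority values themselves differ, i.e. $\operatorname{sign} x_{t,i}$ changed between the conditioning configurations at steps $k-1$ and $k$. The latter can only happen if the conditioning layer itself changed, which in turn required some minority draw earlier in the chain.

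An inductive argument along $k$ gives the following. With probability at least $1 - (n_v+n_h)(K+1)e^{-\beta^{\mathrm{field}}_t}$ (union bound), every draw in the epoch is a majority draw. On that event the chain is deterministic after the first step, so it reaches a fixed point or a period-$2$ orbit of the majority map. The main obstacle is exactly here: a deterministic period-two oscillation of the majority map would give $r_t\not\to 0$. I would resolve this by noting that, in block Gibbs, visible and hidden layers alternate. Once $h$ is the majority response to $v$ and $v$ the majority response to that $h$, re-applying the map to the same $v$ yields the same $h$; hence after one full sweep the configuration is constant and no further flips occur. At most the initial transient from $k=0$ to $k=1$ contributes, and this is exactly the piece I expect to need care. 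If it does not vanish, I would interpret $r_t$ as the expectation after the chain's first sweep, or use the paper's persistence of chain states so that the initial configuration is already the frozen one.

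Finally, taking expectations, $\mathbb{E}[r_t] \le 2(n_v+n_h)(K+1)e^{-\beta^{\mathrm{field}}_t} \to 0$ since $K,n_v,n_h$ are fixed. Because $r_t\in[0,1]$, this gives convergence in probability (and in $L^1$). If almost sure convergence is wanted, summability of $e^{-\beta^{\mathrm{field}}_t}$ (e.g. $\beta^{\mathrm{field}}_t \ge 2\log t$ eventually) plus Borel--Cantelli upgrades it; otherwise I would state the conclusion as $r_t\to0$ in probability.
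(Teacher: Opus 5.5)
Your exponential estimate is the same one the paper's one-line proof relies on: each Gibbs draw lands on the minority value with conditional probability $\sigma(-|x|/T)\le e^{-\beta^{\mathrm{field}}_t}$, followed by a union bound over the $(K+1)(n_v+n_h)$ draws of an epoch. The gap is in the step meant to handle sign changes, which is false. You claim that a change of $\operatorname{sign} x_{t,i}$ between consecutive conditioning configurations ``required some minority draw earlier in the chain'', and that after one full sweep the configuration is constant.

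On the event that every draw is a majority draw, the chain is the deterministic alternating map $h\mapsto f(v)$, $v\mapsto g(h)$. Nothing forces the epoch's initial state to be a fixed point of this map. A data-initialized CD-$K$ chain need not be one, and under persistence the map itself changes with $\theta_t$. Your sentence ``re-applying the map to the same $v$ yields the same $h$'' silently replaces $v^{(1)}=g(f(v^{(0)}))$ by $v^{(0)}$.

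Here is a concrete counterexample. Couple $h_1$ to $v_1,v_2$ and $h_2$ to $v_2,v_3$ with weight $a$, set all other weights to $0$ and all biases to $-a/2$. Every field then has modulus at least $a/2$, so $a=a_t\to\infty$ satisfies the hypothesis with $\beta^{\mathrm{field}}_t=a_t/(2T)$. From $v^{(0)}=(1,0,0)$ the majority chain gives $h^{(1)}=(1,0)$, $v^{(1)}=(1,1,0)$, $h^{(2)}=(1,1)$, $v^{(2)}=(1,1,1)$. So, with probability tending to one, units flip at $k=1$ and again at $k=2$. Hence $r_t$ stays bounded away from $0$ for fixed $K$, and for $K\ge 2$ discarding the first sweep does not help.

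Your period-two worry, by contrast, is a non-issue. When no field vanishes, every majority block update that changes a layer strictly lowers $E(v,h)$, so the majority dynamics has no cycles. It does, however, have multi-sweep transients, and those are what break your bound.

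Consequently $\mathbb{E}[r_t]\le 2(n_v+n_h)(K+1)e^{-\beta^{\mathrm{field}}_t}$ does not follow from the stated hypotheses, and the conclusion can actually fail under them. What is missing is an assumption, not a lemma. You need that, with probability tending to one, each epoch's chain starts at a fixed point of the majority dynamics, so that on the all-majority event no unit ever changes state. That is exactly your fallback ``the initial configuration is already the frozen one'', and it must be imposed, not derived.

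The paper's proof makes the same assumption tacitly. Bounding each coordinate-wise flip probability by $\sigma(1-\sigma)\le Ce^{-|x|/T}$ treats every flip as a deviation from the current majority value. With that hypothesis stated explicitly, your union bound gives $\mathbb{E}[r_t]\le (K+1)(n_v+n_h)e^{-\beta^{\mathrm{field}}_t}\to0$ cleanly. Your reading of the conclusion as convergence in $L^1$ and in probability (almost surely only under a summability condition, via Borel--Cantelli) is also more accurate than the paper's bound $r_t\le Ce^{-\beta^{\mathrm{field}}_t}$, which is written as if $r_t$ were deterministic.
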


\begin{proof}
From the bound $\sigma(x_{t,i}/T)(1-\sigma(x_{t,i}/T)) \le C e^{-|x_{t,i}|/T}$ together with $\frac{|x_{t,i}(v)|}{T}\ge \beta^{\mathrm{field}}_t \to \infty$, we obtain an exponential upper bound for each coordinate-wise flip probability; averaging over units and Gibbs steps preserves this decay rate, so that $r_t \le C e^{-\beta^{\mathrm{field}}_t} \to 0$.
\end{proof}

Theorem~\ref{thm:freezing} is conditional: $(\exists\,\{\theta_t\}_{t\ge0}\ \text{admissible with }\lim_{t\to\infty}\beta^{\mathrm{field}}_{t}=\infty)\Rightarrow(\lim_{t\to\infty} r_t=0)$, whereas a trajectory-independent structural stability guarantee under fixed temperature would require $(\forall\,\{\theta_t\}_{t\ge0}\ \text{admissible},\ \sup_{t\ge0}\beta^{\mathrm{field}}_{t}<\infty)$; hence the existence of a single admissible trajectory satisfying $\lim_{t\to\infty}\beta^{\mathrm{field}}_{t}=\infty$ suffices to rule out any global structural stability guarantee. The argument concerns structural strength rather than probabilistic typicality. The existence of a single admissible diverging trajectory suffices to invalidate any trajectory-independent global stability guarantee for the fixed-temperature regime.

The instability conclusion is structural and existence-based: fixed-temperature finite-time Gibbs training does not admit trajectory-independent global guarantees. This does not imply that divergence occurs along typical training trajectories.

\begin{remark}
The result reveals a conditional thermodynamic degeneracy: whenever effective fields $|v^\top W_{t,\cdot j}|$ diverge while the sampling temperature remains fixed, the effective inverse temperature $\beta^{\mathrm{field}}_{t,j}(v) = \frac{|x_{t,j}(v)|}{T_t}$ grows without bound. Accordingly, the Gibbs sampler enters an asymptotically frozen regime in which stochastic transitions vanish. Importantly, this degeneration does not require global divergence of all weights; divergence along a single effective field direction is sufficient. The collapse of stochastic activity thus manifests a structural thermodynamic scaling effect rather than an artifact of a specific optimization algorithm. 

The assumption of effective-field divergence does not assert that weight norms must diverge under all practical training trajectories. It proves that fixed-temperature finite-time Gibbs training lacks systemic robustness: if an admissible trajectory induces unbounded growth in any effective field direction, freezing necessarily follows.
\end{remark}

Let $x_{t,j}(v) := v^\top W_{t,\cdot j} + b_{h_j,t}$ denote the effective field acting on hidden unit $j$ at epoch $t$.

\begin{lemma}[Conductance Collapse under Diverging Effective Field]
\label{lem:conductance}
We suppose the alternating-block Gibbs updates, where all hidden units are updated conditionally on the visible layer, followed by a conditional update of all visible units. Any transition between the sets $S$ and $S^c$ requires that unit $j$ changes state during the corresponding layer update. Consider the Gibbs transition kernel $P_t$ of a binary RBM at fixed temperature $T>0$ on the finite state space $\Omega$, and write $P_t(x,A):=\sum_{y\in A}P_t(x,y)$ for $A\subset\Omega$. Let $\pi_t$ denote a stationary distribution satisfying $\pi_t P_t=\pi_t$. Suppose there exists a coordinate $j$ such that $\inf_{x\in S} |x_{t,j}(v)| \longrightarrow \infty, \quad \text{as } t \longrightarrow \infty$. Define $Q_t(S,S^c) := \sum_{x \in S} \pi_t(x) P_t(x,S^c)$ as the stationary edge flow from $S$ to $S^c$, and define the conductance $\Phi_t := \inf_{S\subset\Omega,\;0<\pi_t(S)\le\frac12} \frac{Q_t(S,S^c)}{\pi_t(S)}$; then $\Phi_t \longrightarrow 0$ as $t \longrightarrow \infty$.
\end{lemma}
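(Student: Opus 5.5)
The plan is to bound the conductance $\Phi_t$ above by the flow ratio of one specific set $S$, namely a set defined by the state of the diverging coordinate $j$. Since $\Phi_t$ is an infimum, it suffices to produce, for each large $t$, a set $S_t$ with $0<\pi_t(S_t)\le\tfrac12$ and $Q_t(S_t,S_t^c)/\pi_t(S_t)\to 0$. The natural choice is $S_t=\{(v,h):h_j=a\}$ with $a\in\{0,1\}$, taking whichever half carries stationary mass at most $\tfrac12$ (relabeling if necessary). By the hypothesis stated in the lemma, any transition from $S_t$ to $S_t^c$ requires unit $j$ to change state during the hidden-layer update. Since the statement's infimum is over $x\in S$, I will read $S$ as this fixed set and the hypothesis as $\inf_{x\in S}|x_{t,j}(v)|\to\infty$ on it.

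The key steps are as follows.

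First, I write the escape probability explicitly. In an alternating-block sweep from $x=(v,h)\in S_t$, the hidden unit $j$ is resampled from $\mathrm{Bernoulli}(\sigma(x_{t,j}(v')/T))$, where $v'$ is the current visible configuration. So
\[
P_t(x,S_t^c)\le \max\{\sigma(z),1-\sigma(z)\}_{\text{wrong side}},
\]
that is, the probability of landing on the value $1-a$.

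Second, I bound this probability. Here lies the main obstacle: the Lemma on the logistic variance bounds $\sigma(1-\sigma)$, not the one-sided probability. Large $|x_{t,j}|$ makes $h_j$ nearly deterministic, but it need not match the current value $a$. To handle this, I will use that the sign of $x_{t,j}$ is fixed along $S$ (restricting $S$, if needed, to the configurations whose field sign agrees with $a$). This is the regime in which the field "pins" the unit. Then the flip probability equals $\sigma(-|x_{t,j}|/T)\le e^{-|x_{t,j}|/T}$. Equivalently, I can bound it by $\sigma(z)(1-\sigma(z))/\max(\sigma,1-\sigma)\le 2\cdot\tfrac14 e^{-|z|}$ using the Lemma, giving
\[
\sup_{x\in S_t}P_t(x,S_t^c)\le \tfrac12 e^{-\inf_{S}|x_{t,j}(v)|/T}.
\]

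Third, I average this bound against the stationary distribution:
\[
\frac{Q_t(S_t,S_t^c)}{\pi_t(S_t)}=\frac{\sum_{x\in S_t}\pi_t(x)P_t(x,S_t^c)}{\pi_t(S_t)}\le \sup_{x\in S_t}P_t(x,S_t^c)\to 0.
\]
This holds independently of how $\pi_t$ itself evolves. It is the same mechanism as the proof of Theorem~\ref{thm:freezing}, localized to one coordinate.

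Fourth, I verify admissibility, i.e.\ $0<\pi_t(S_t)\le\tfrac12$. Positivity holds because the Gibbs kernel at finite $T$ has full support, so $\pi_t$ charges every state. For the bound $\le\tfrac12$, I pick between $S_t$ and its complement; if the pinned side carries more than half the mass, I use the reversibility identity $Q_t(S,S^c)=Q_t(S^c,S)$ for stationary flows to transfer the bound, giving $Q_t/\pi_t(S^c)\le Q_t/\pi_t(S)\cdot \pi_t(S)/\pi_t(S^c)$. Then $\Phi_t\le$ the ratio $\to 0$.

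I expect the second step, the sign/pinning issue, together with making the set $S$ in the hypothesis consistent with the set used in the conductance bound, to be the delicate part. The remaining steps are direct averaging.
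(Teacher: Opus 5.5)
There is a genuine gap in your Steps 2 and 4, and it cannot be closed, because the lemma as stated is false without an extra hypothesis. Write $\beta_t:=\min_v|x_{t,j}(v)|/T$ and $p_j(v):=\sigma(x_{t,j}(v)/T)$, and take $S=\{h_j=1\}$. One sweep from $(v,h)$ resamples $h_j$ given $v$. Since $\pi_t$ is the RBM Gibbs measure, $\pi_t(v,h_j=1)=\pi_t(v)p_j(v)$, so
\[
Q_t(S,S^c)=Q_t(S^c,S)=\sum_{v}\pi_t(v)\,p_j(v)\bigl(1-p_j(v)\bigr).
\]

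Step 2 needs the pinned regime: $x_{t,j}(v)$ has one fixed sign for all $v$, say positive. In that regime the pinned side is the heavy one, since $\pi_t(S^c)=\sum_v\pi_t(v)\sigma(-|x_{t,j}(v)|/T)\le e^{-\beta_t}$. So you are always in the transfer branch of Step 4. There the factor $\pi_t(S)/\pi_t(S^c)\ge(1-e^{-\beta_t})e^{\beta_t}$ wipes out the $e^{-\beta_t}$ you gained. Indeed, by the display, $Q_t(S^c,S)/\pi_t(S^c)$ is a weighted average of the $p_j(v)$, so it tends to $1$, not $0$.

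No other test set rescues this. Take $W=0$, $b_v=0$, $b_{h_j}=t$. Then $x_{t,j}(v)=t$ for every $v$, so the hypothesis holds. But the sweep draws a fresh exact sample from $\pi_t$ regardless of the current state, so $Q_t(S,S^c)=\pi_t(S)\pi_t(S^c)$ for every $S$ and $\Phi_t\ge\tfrac12$ for all $t$.

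Your fallback of restricting $S$ to configurations whose field sign agrees with $a$ fails for a different reason. From such a set the chain can exit through the visible update, moving $v$ across the sign boundary while $h_j$ stays put. Exiting then no longer requires a flip of $h_j$, and the uniform bound on $P_t(x,S^c)$ used in Step 3 is lost.

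What is missing is a non-degeneracy assumption on the stationary mass of the two sides. The display also shows the right use of the logistic-variance lemma: it bounds the stationary flow itself, $Q_t(S,S^c)\le\tfrac14 e^{-\beta_t}$, with no sign assumption at all. Since $Q_t(S,S^c)=Q_t(S^c,S)$ holds for any stationary chain, you may divide by the lighter side; no reversibility is needed, and the alternating sweep is not reversible in general. This gives $\Phi_t\le\tfrac14 e^{-\beta_t}/\min\{\pi_t(h_j=0),\pi_t(h_j=1)\}$. Hence $\Phi_t\to0$ under the added hypothesis $e^{\beta_t}\min\{\pi_t(h_j=0),\pi_t(h_j=1)\}\to\infty$. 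One instance is when both values of $h_j$ keep stationary mass bounded away from zero. That forces $x_{t,j}$ to take both signs on sets of non-negligible $\pi_t$-mass, a bimodal situation and the opposite of pinning.

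The paper's own proof does not supply this either. It takes $S=\{h_j=1\}$ and uses $\sigma(1-\sigma)$ as if it were the one-step escape probability. That probability is actually $1-p_j(v)$; $\sigma(1-\sigma)$ is the per-$v$ stationary flow. The proof then concludes $Q_t(S,S^c)\le Ce^{-\beta_t}\pi_t(S)$ with an unjustified factor $\pi_t(S)$, and never checks $\pi_t(S)\le\tfrac12$. You correctly spotted the sign problem it skips, but your repair runs into the admissibility problem it also skips.
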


\begin{proof}
Fix a coordinate $j$ and define $S=\{(v,h)\in\Omega:h_j=1\}$ and $S^c=\{(v,h)\in\Omega:h_j=0\}$; under alternating block Gibbs updates any transition between $S$ and $S^c$ requires unit $j$ to flip during the hidden-layer update, and for binary Gibbs sampling at temperature $T>0$ the conditional flip probability satisfies $\sigma\!\Big(\frac{x_{t,j}(v)}{T}\Big)\Big(1-\sigma\!\Big(\frac{x_{t,j}(v)}{T}\Big)\Big)\le C e^{-\lvert x_{t,j}(v)\rvert/T}$ for some universal constant $C>0$, hence for all $x\in S$ we have $P_t(x,S^c) \le C e^{-\lvert x_{t,j}(v)\rvert/T}$, and defining $\beta^{\mathrm{field}}_t := \inf_{x\in S}\frac{\lvert x_{t,j}(v)\rvert}{T}$ it follows that $Q_t(S,S^c)=\sum_{x\in S}\pi_t(x)P_t(x,S^c)\le C e^{-\beta^{\mathrm{field}}_t}\pi_t(S)$, which implies $\frac{Q_t(S,S^c)}{\pi_t(S)}\le C e^{-\beta^{\mathrm{field}}_t}\to0$, so $\Phi_t\to0$.
\end{proof}

\begin{remark}
Since the Gibbs kernel at fixed temperature $T>0$ defines a finite, irreducible, and reversible Markov chain satisfying detailed balance with respect to $\pi_t$, Cheeger’s inequality holds: $\frac{\Phi_t^2}{2} \le \gamma_t \le 2\Phi_t$, where $\gamma_t$ denotes the spectral gap. Therefore, $\Phi_t \longrightarrow 0$ implies $\gamma_t \longrightarrow 0$, and thus asymptotic mixture degeneracy is established.
\end{remark}

\subsection{Gradient Distortion under Sampling Activity Collapse}

\begin{theorem}[Localization of the CD-$K$ Negative Phase under Vanishing Flip Rate]
\label{theo:gradient_distortion}
Consider CD-$K$ training with fixed finite $K$. Let $r_t$ denote the empirical flip rate at epoch $t$. If $r_t \longrightarrow 0$, then the empirical distribution generated by the $K$-step Gibbs chain concentrates on a subset of configurations whose expected Hamming radius converges to zero. Consequently, the negative-phase expectation in the CD-$K$ gradient depends only on a vanishingly small neighborhood of its initialization state.
\end{theorem}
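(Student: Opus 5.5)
The argument rests on the definition of $r_t$ as an average of normalized Hamming distances between successive Gibbs states. Write $d_H$ for the Hamming distance on $\Omega=\{0,1\}^{n_v}\times\{0,1\}^{n_h}$. Since $r_t$ averages $\|v_t^{(k)}-v_t^{(k-1)}\|_0/n_v$ and $\|h_t^{(k)}-h_t^{(k-1)}\|_0/n_h$ over $k=1,\dots,K$, each term satisfies
\[
\frac{d_H\big((v_t^{(k)},h_t^{(k)}),(v_t^{(k-1)},h_t^{(k-1)})\big)}{n_v+n_h}\le \frac{\|v_t^{(k)}-v_t^{(k-1)}\|_0}{n_v}+\frac{\|h_t^{(k)}-h_t^{(k-1)}\|_0}{n_h}.
\]
Hence $\sum_{k=1}^K d_H(x^{(k)},x^{(k-1)})\le 2K(n_v+n_h)\,r_t$.

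\textbf{Step 1 (per-chain displacement).} By the triangle inequality,
\[
d_H(x_t^{(k)},x_t^{(0)})\le \sum_{k'\le k} d_H(x_t^{(k')},x_t^{(k'-1)})\le 2K(n_v+n_h)\,r_t
\]
for every $k\le K$. With $K$, $n_v$, $n_h$ fixed, the Hamming radius of the trajectory around its initialization $x_t^{(0)}$ (a data point in CD-$K$) is therefore bounded by a constant times $r_t$. When $r_t$ is interpreted as an expectation over chains or minibatches, I take expectations of the same inequality. Then $\mathbb{E}\,d_H(x_t^{(K)},x_t^{(0)})\le 2K(n_v+n_h)\,\mathbb{E} r_t\to0$, which gives the claimed vanishing expected Hamming radius.

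\textbf{Step 2 (consequence for the negative phase).} Since $d_H$ is integer valued, Markov's inequality gives
\[
\Pr\big(x_t^{(K)}\ne x_t^{(0)}\big)\le \mathbb{E}\,d_H\big(x_t^{(K)},x_t^{(0)}\big)\to0.
\]
The negative-phase statistic is $g(v,h)=v h^\top$ (together with $v$ and $h$ for the biases), which is bounded by $1$ entrywise. Therefore
\[
\big|\mathbb{E}\,g(x_t^{(K)})-\mathbb{E}\,g(x_t^{(0)})\big|\le 2\Pr\big(x_t^{(K)}\ne x_t^{(0)}\big)\to0.
\]
More finely, each entry of $g$ changes only if one of its two coordinates flips, so the difference is also $O(d_H)$ entrywise. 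The negative phase thus coincides, up to a vanishing error, with its value at the initialization state. In particular, the CD-$K$ gradient collapses toward the positive-phase term evaluated at the initialization.

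\textbf{Main obstacle.} The hardest step is making ``expected'' rigorous. $r_t$ is an empirical, possibly random quantity, so I will assume $r_t\to0$ either almost surely or in $L^1$. Since $r_t\in[0,1]$, almost-sure convergence implies $L^1$ convergence by dominated convergence. A second subtlety is the hidden initialization $h_t^{(0)}$, which is itself sampled. I will handle it by defining the radius relative to $x_t^{(0)}=(v_t^{(0)},h_t^{(0)})$, so that the bound in Step 1 still applies unchanged.
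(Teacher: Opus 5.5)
Your proposal is correct and follows essentially the paper's route: bound the total number of coordinate flips in the $K$-step chain by $2K(n_v+n_h)\,r_t$, then apply Markov's inequality to conclude that the final configuration coincides with the initialization with probability tending to one. Your version is slightly more careful than the paper's. You note that the bound holds pathwise (the paper writes $\mathbb{E}[N_t]\le 2K(n_v+n_h)\,r_t$ while treating the empirical $r_t$ as deterministic), you specify the mode in which $r_t\to 0$, and you pass explicitly from the displacement bound to the bounded negative-phase statistic $vh^\top$.
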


\begin{proof}
Let $N_t$ denote the total number of coordinate flips during the $K$-step Gibbs chain at epoch $t$. Since at most $2K(n_v+n_h)$ coordinate updates occur within an epoch and $r_t$ is the average fraction of flipped coordinates per update, we have $\mathbb{E}[N_t] \le 2K(n_v+n_h)\, r_t$. If $r_t \longrightarrow 0$, it follows that $\mathbb{E}[N_t] \longrightarrow 0$. By Markov's inequality, $P(N_t \ge 1) \le \mathbb{E}[N_t] \longrightarrow 0$, implying that $P(N_t = 0) \longrightarrow 1$. Additionally, with probability tending to one, the $K$-step Gibbs chain performs no coordinate flips and the final configuration coincides with its initialization state. Therefore, the negative-phase sample at epoch $t$ converges in likelihood to a degenerate distribution supported on the initialization state.
\end{proof}

\begin{remark}[Two-Time-Scale Interpretation]
The local thermodynamic stability assessment is conducted under a frozen-parameter approximation, i.e., the parameter vector is fixed at $\theta = \theta^*$ while the thermodynamic subsystem $(\lambda_t, c_t)$ evolves. This corresponds to a standard two-time-scale separation in stochastic approximation theory (see, e.g., \cite{borkar2008stochastic}). The parameter behavior controlled by Contrastive Divergence evolves on a slower learning time scale determined by the learning rate, while the thermodynamic control variables evolve on an intermediate regulatory time scale governed by $(\eta_\lambda, \alpha)$. For sufficiently small feedback gains, the thermodynamic subsystem converges locally before significant drift in $\theta_t$ occurs. The stability results verify the conditional local stability of the controlled stochastic regime.
\end{remark}

The degeneration mechanism described above is distinct from the classical finite-$K$ bias of Contrastive Divergence. It comes from the thermodynamic collapse of stochastic activity ($r_t \to 0$), which suppresses transitions and localizes the negative phase. In this regime, the persistent Gibbs chain concentrates on a fixed configuration, leading to convergence of the empirical model statistics $M_t$ along freezing subsequences.

\begin{theorem}[Freezing-Induced Linear Drift under Fixed Temperature]
\label{theo:freezing_drift}
Suppose CD-$K$ training is carried out at fixed temperature $T>0$ with learning rate $\eta>0$. Let $D=\mathbb{E}_{\mathrm{data}}[vh^\top]$ and $M_t=\mathbb{E}_{\mathrm{model},t}[vh^\top]$. Assume that $r_t\to0$ and that along any subsequence on which freezing occurs, the negative-phase statistics $M_t$ converge almost surely to a limiting matrix $C$ (which may depend on the persistent initialization). Moreover, suppose that $D \neq C$. Then $W_{t+1}-W_t \longrightarrow \eta(D-C)$ almost surely and $W_t = W_0 + t\eta(D-C) + o(t)$ almost surely; in particular $\|W_t\|\ge t\eta\|D-C\| - o(t)$ and hence $\|W_t\|\to\infty$ with at least linear growth.
\end{theorem}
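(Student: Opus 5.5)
The plan is to write the CD-$K$ weight update explicitly and treat the result as a Cesàro-type statement about the increments. At fixed temperature $T$ with learning rate $\eta$ and no regularization in the weight recursion, the update reads
\begin{equation*}
W_{t+1}-W_t=\eta\,(D-M_t),
\end{equation*}
where $D=\mathbb{E}_{\mathrm{data}}[vh^\top]$ is fixed because the dataset is fixed. I will take this as the defining form of $\mathcal{T}_{\langle \mathrm{CD},T\rangle}$ on the weight block; if the paper's operator also carries a $1/T$ factor, it can be absorbed into $\eta$. The first step is then to show $M_t\to C$ almost surely along the full sequence, not merely along freezing subsequences.

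For that first step, the hypothesis $r_t\to0$ lets me invoke Theorem~\ref{theo:gradient_distortion}: the $K$-step chain performs no flips with probability tending to one, so the negative-phase sample coincides with its (persistent) initialization state. Since $r_t\to0$ along the whole sequence, every subsequence is a freezing subsequence. The assumption therefore gives $M_{t_k}\to C$ almost surely along each one. Because $C$ is assumed to be a single limiting matrix (determined by the persistent initialization), the standard subsequence principle upgrades this to $M_t\to C$ almost surely. I expect this to be the main obstacle. Theorem~\ref{theo:gradient_distortion} yields only convergence in probability of $P(N_t=0)\to1$, while the statement needs almost-sure convergence. I would not try to derive a.s.\ convergence from Markov's inequality, since summability of $\mathbb{E}[N_t]$ is not given. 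Instead I will lean directly on the stated hypothesis that $M_t\to C$ a.s.\ along freezing subsequences, and make explicit that $C$ does not depend on the subsequence.

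Given $M_t\to C$ a.s., the increment statement follows immediately: $W_{t+1}-W_t=\eta(D-M_t)\to\eta(D-C)$. Next I telescope,
\begin{equation*}
W_t=W_0+t\eta(D-C)+\eta\sum_{s=0}^{t-1}(C-M_s).
\end{equation*}
Cesàro's lemma applied to $C-M_s\to0$ gives $\frac1t\sum_{s<t}(C-M_s)\to0$, so the remainder term is $o(t)$ almost surely. Finally, the reverse triangle inequality gives $\|W_t\|\ge t\eta\|D-C\|-\|W_0\|-o(t)$. Because $D\ne C$, the leading coefficient $\eta\|D-C\|$ is strictly positive, so $\|W_t\|\to\infty$ with at least linear growth.
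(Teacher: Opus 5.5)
Your argument is essentially the paper's proof: write $W_{t+1}=W_t+\eta(D-M_t)$, telescope, remove $\eta\sum_{s<t}(M_s-C)$ by Ces\`aro averaging, and finish with the reverse triangle inequality. Your extra step of upgrading subsequential convergence to $M_t\to C$ almost surely is simpler than you make it: since $r_t\to0$, the full sequence $t_k=k$ is itself a freezing subsequence, so the hypothesis applies to it directly. You should avoid appealing to a subsequence principle for almost-sure convergence, since the exceptional null sets depend on the subsequence and its almost-sure form in general only delivers convergence in probability.
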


\begin{proof}
The update recursion reads $W_{t+1}=W_t+\eta(D-M_t)$, so by summing from $0$ to $t-1$ we obtain $W_t = W_0 + \eta\sum_{k=0}^{t-1}(D-M_k) = W_0 + t\eta(D-C) - \eta\sum_{k=0}^{t-1}(M_k-C)$; since $M_t \longrightarrow C$ almost surely and the sequence is uniformly bounded, the Ces\`aro averages $\frac{1}{t}\sum_{k=0}^{t-1}(M_k-C)$ converge to zero almost surely, which yields $W_t = W_0 + t\eta(D-C) + o(t)$ almost surely and therefore linear divergence whenever $D\neq C$.
\end{proof}

This consequence may imply that the condition $D=C$ occurs with nonzero probability. Alternatively, as reflected in Theorem~\ref{thm:globalboundness}, global boundedness continues to be true if certain hyperparameters in $\mathcal{H}$ are sufficiently small. Together, Theorem~\ref{thm:freezing} and Theorem~\ref{theo:freezing_drift} imply the existence of diverging trajectories for fixed temperature in the absence of sufficiently strong $\ell_2$ regularization. The existence of such a trajectory precludes a general trajectory-independent structural stability guarantee for the fixed-temperature regime. That situation establishes Claim~\ref{claim:fixed_temperature_no_global_stability}. In the freezing regime, model statistics decouple from data statistics. At this point, the CD updates produce linear parameter drift. Such behavior is compatible with the fixed-temperature CD recursion.

\begin{remark}[From Weight Divergence to Field Divergence]
Effective fields in an RBM are linear functionals of the weight matrix. If $\|W_t\| \longrightarrow \infty$, then there is a unit direction $U$ such that $\langle W_t, U \rangle \longrightarrow \infty$. If the set of configurations visited during training contains patterns with nonzero projection onto this direction, then at least a single effective field component $x_{t,i}(v)$ diverges along the training trajectory. Unless the training dynamics are confined to a degenerate subspace orthogonal to the diverging direction, weight divergence induces effective-field divergence.
\end{remark}

In this context, the thermodynamic subsystem $(\lambda_t, c_t)$ evolves on a faster time scale than the parameter trajectory $\theta_t$. Over the convergence horizon of $(\lambda_t, c_t)$, the cumulative parameter drift satisfies $\|\theta_{t+k}-\theta_t\| = O(k\eta)$, uniformly in $t$ as $\eta \to 0$. For sufficiently small time-scale ratio $\eta/\eta_\lambda$, this drift is negligible during thermodynamic convergence. Accordingly, the local stability analysis of the $(\lambda,c)$ subsystem may be carried out under a frozen-parameter approximation $\theta=\theta^*$, in agreement with standard two-time-scale stochastic approximation and classical singular perturbation arguments. The temperature, therefore, acts as an endogenous dynamical state evolving on the fast regulatory scale, and is not a fixed hyperparameter in $\mathcal{H}$.

\subsection{Global Parameter Boundedness under $\ell_2$ Regularization}

Define $G_t$ as the stochastic Contrastive Divergence gradient estimate at epoch $t$, let $\psi$ denote the block-diagonal $\ell_2$-regularization operator with positive coefficients $\psi_W$ and $\psi_b$, and equip $\mathbb{R}^d$ with the standard Euclidean norm $\|\theta\|_2 = \sqrt{\theta^\top \theta}$ while denoting by $\|\cdot\|$ the operator-induced norm; we consecutively consider the $\ell_2$-regularized parameter recursion $\theta_{t+1} = \theta_t + \eta (G_t - \Lambda \theta_t)$, where $\theta := (\mathrm{vec}(W), b^{(v)}, b^{(h)}) \in \mathbb{R}^d$ collects all parameters. The $\ell_2$-regularization term corresponds to classical weight decay, which adds a quadratic penalty $\frac{\psi}{2}\|\theta\|_2^2$ to the objective function and induces the shrinkage term $\psi \theta$ in the gradient update (see, e.g., \cite{krogh1992simple,bishop2006prml,goodfellow2016deep}). The regularization coefficients $(\psi_W,\psi_b)$ are fixed hyperparameters and are therefore elements of the configuration set $\mathcal{H}$.

\begin{equation}
\Lambda =
\begin{pmatrix}
\psi_W I & 0 & 0 \\
0 & \psi_b I & 0 \\
0 & 0 & \psi_b I
\end{pmatrix},
\qquad
\psi_W > 0,\ \psi_b > 0.
\end{equation}

We equip $\mathbb{R}^d$ with the Euclidean norm $\|\cdot\|_2$ and denote by $\|\cdot\|$ the induced operator norm. We impose the following standing assumptions: $(1)$ Finite minibatch size. $(2)$ Visible units are binary. $(3)$ Hidden activations lie in $(0,1)$. $(4)$ $0 < \eta \psi_{\max} < 2$, where 

\begin{equation}
\psi_{\max} = \max(\psi_W,\psi_b).
\end{equation}

\begin{theorem}[Global Parameter Boundedness]
\label{thm:globalboundness}
Under assumptions 1–4, the parameter sequence ${(\theta_t)}_{t\ge0}$ generated by the recursion is globally bounded:

\begin{equation}
\sup_{t \ge 0} \|\theta_t\|_2 < \infty.
\end{equation}
\end{theorem}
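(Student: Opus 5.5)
The plan is to write the $\ell_2$-regularized recursion as a linear contraction driven by a uniformly bounded input:
\begin{equation*}
\theta_{t+1} = (I - \eta\Lambda)\theta_t + \eta G_t .
\end{equation*}

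\textbf{Step 1 (contraction factor).} Since $\Lambda$ is diagonal with entries in $\{\psi_W,\psi_b\}$, the matrix $I-\eta\Lambda$ is diagonal with entries $1-\eta\psi_W$ and $1-\eta\psi_b$. Assumption 4 ($0<\eta\psi_{\max}<2$), together with $\psi_W,\psi_b>0$, gives $|1-\eta\psi|<1$ for each coefficient. Hence
\begin{equation*}
\rho := \|I-\eta\Lambda\| = \max\bigl(|1-\eta\psi_W|,\,|1-\eta\psi_b|\bigr) < 1 .
\end{equation*}

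\textbf{Step 2 (uniform bound on the gradient estimate).} Each component of $G_t$ is a difference of positive-phase and negative-phase minibatch averages:
\begin{itemize}
\item for the weights, averages of products $v_i h_j$ (or $v_i\,\sigma(\cdot)$);
\item for the biases, averages of $v_i$ and $h_j$.
\end{itemize}
By assumptions 1--3, visible units lie in $\{0,1\}$, hidden activations in $(0,1)$, and the minibatch is finite, so every such average lies in $[0,1]$. Each component of $G_t$ therefore lies in $[-1,1]$, and
\begin{equation*}
\|G_t\|_2 \le \sqrt{d} =: B
\end{equation*}
deterministically, for every $t$ and every temperature. This bound does not depend on $\theta_t$ or on the sampler's mixing behaviour, so it holds along frozen trajectories as well. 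That is exactly why regularization rescues the drift scenario of Theorem~\ref{theo:freezing_drift}.

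\textbf{Step 3 (unrolling).} Unrolling the recursion gives
\begin{equation*}
\theta_t = (I-\eta\Lambda)^t\theta_0 + \eta\sum_{k=0}^{t-1}(I-\eta\Lambda)^{t-1-k}G_k .
\end{equation*}
Using the triangle inequality and Step 1,
\begin{equation*}
\|\theta_t\|_2 \le \rho^t\|\theta_0\|_2 + \eta B\sum_{j\ge0}\rho^j \le \|\theta_0\|_2 + \frac{\eta B}{1-\rho}.
\end{equation*}
The right-hand side is independent of $t$, which yields $\sup_t\|\theta_t\|_2<\infty$. A byproduct is the asymptotic bound $\limsup_t\|\theta_t\|_2\le \eta B/(1-\rho)$.

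\textbf{Main obstacle.} The main difficulty is Step 2: making sure the gradient estimate is genuinely uniformly bounded. Two points need care.
\begin{itemize}
\item If the implementation scales statistics by $1/T_t$, or uses unnormalized sums, the bound must absorb that factor. I would define $G_t$ as the normalized minibatch difference of sufficient statistics; in the adaptive case I would note that any fixed dependence on $T_t$ multiplies only bounded terms.
\item One must state explicitly that the weight-decay term is the only $\theta$-dependent part of the update, so no hidden $\theta$-growth enters through $G_t$.
\end{itemize}
The contraction argument itself is routine once $\rho<1$ is identified from assumption 4.
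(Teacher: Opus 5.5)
Your proposal is correct and follows essentially the same route as the paper: a uniform bound on $\|G_t\|_2$ from binary visible units and $(0,1)$-valued hidden activations, the affine form $\theta_{t+1}=(I-\eta\Lambda)\theta_t+\eta G_t$ with $\|I-\eta\Lambda\|<1$ from assumption 4, and a geometric-series estimate. Your explicit constant $B=\sqrt{d}$ and the bound $\|\theta_0\|_2+\eta B/(1-\rho)$ are valid; the paper states the sharper bound $\max\bigl(\|\theta_0\|_2,\eta C/(1-\varrho)\bigr)$, which follows if one keeps the exact partial sum $(1-\varrho^t)/(1-\varrho)$ rather than bounding it by $1/(1-\varrho)$.
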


\begin{proof}
Because visible units are binary and hidden activations are bounded in $(0,1)$ by the sigmoid nonlinearity, all minibatch correlation statistics are uniformly bounded componentwise. Hence, there exists a constant $C>0$ such that

\begin{equation}
\|G_t\|_2 \le C \quad \text{for all } t \ge 0.
\end{equation}

The recursion can be written as $\theta_{t+1} = (I - \eta \Lambda)\theta_t + \eta G_t$. Since $\Lambda$ is symmetric positive definite, $I - \eta\Lambda$ is symmetric with eigenvalues $1 - \eta \psi_i$, where $\lambda_i \in \{\psi_W,\psi_b\}$. Under the assumption $0 < \eta \psi_{\max} < 2$, we have

\begin{equation}
|1 - \eta \lambda_i| < 1 \quad \text{for all } \lambda_i.
\end{equation}

Define $\varrho := \|I - \eta\Lambda\|_{2 \longrightarrow 2} = \max_{\psi_i \in \{\psi_W,\psi_b\}}|1 - \eta \psi_i|$. Then $0 < \varrho < 1$. Taking norms gives $\|\theta_{t+1}\|_2 \le \varrho \|\theta_t\|_2 + \eta C$. Iterating the inequality yields $\|\theta_t\|_2 \le \varrho^t \|\theta_0\|_2 + \eta C \sum_{k=0}^{t-1} \varrho^k$. Since $0 < \varrho < 1$, the geometric sum satisfies

\begin{equation}
\sum_{k=0}^{t-1} \varrho^k \le \frac{1}{1-\varrho}.
\end{equation}

Therefore, $\|\theta_t\|_2 \le \varrho^t \|\theta_0\|_2 + \frac{\eta C}{1-\varrho}$. Taking the supremum over $t \ge 0$ gives

\begin{equation}
\sup_{t\ge0} \|\theta_t\|_2 \le \max\!\left( \|\theta_0\|_2, \frac{\eta C}{1-\varrho} \right) < \infty.
\end{equation}

Hence, the parameter trajectory is globally bounded.
\end{proof}

The boundedness result does not rely on whether the sampling temperature is fixed or adaptively updated. Since the sigmoid activation satisfies $0 < \sigma(\cdot) < 1$ for all finite arguments, the Contrastive Divergence gradient remains uniformly bounded under both fixed and time-varying temperature schedules. Finally, global parameter boundedness follows solely from the presence of strictly positive $\ell_2$ regularization coefficients and is independent of the thermodynamic control mechanism.

\begin{remark}[Role of $\ell_2$ Regularization]
The global boundedness result critically relies on the strict positivity of the regularization coefficients $\psi_W > 0$ and $\psi_b > 0$. If $\psi_W = 0$ or $\psi_b = 0$, then the contraction factor satisfies

\begin{equation}
\varrho = \| I - \eta \Lambda \| = 1,
\end{equation}

and the affine recursion no longer admits a uniform geometric decay. In such a case, boundedness is not guaranteed and linear parameter drift may occur, as shown in Theorem~\ref{theo:freezing_drift}.
\end{remark}

\begin{remark}[Global Boundedness Does Not Imply Contraction]
The above theorem confirms global boundedness of the parameter trajectory but does not imply global contraction of the learning operator. The inequality

\begin{equation}
\|\theta_{t+1} - \tilde{\theta}_{t+1}\|_2 \le \varrho \|\theta_t - \tilde{\theta}_t\|_2 + \eta \|G_t - \tilde{G}_t\|_2
\end{equation}

does not guarantee a contraction mapping unless the gradient operator is globally Lipschitz with a sufficiently small constant. Since the Contrastive Divergence field is state-dependent and nonconvex, such a global contraction property cannot generally be ensured.
\end{remark}

\begin{remark}[Boundedness vs. Stochastic Stability]
$\ell_2$ regularization ensures global norm boundedness of $\{\theta_t\}$. However, boundedness is a geometric constraint and, by itself, does not characterize the random behavior of the sampling dynamics. While parameter norms are finite, the Gibbs chain may still exhibit poor mixing or reduced transition activity under fixed temperature.
\end{remark}

\begin{remark}[Norm Boundedness vs.\ Thermodynamic Freezing]
Positive $\ell_2$ regularization guarantees that the parameter vector stays globally bounded, and therefore that the effective fields $x_{t,i}(v)$ stay uniformly bounded along the trajectory. Freezing, however, is not determined by the magnitude of the fields alone, but by the effective inverse temperature $\beta^{\mathrm{field}}_{t,i}(v) = \frac{|x_{t,i}(v)|}{T_t}$. Norm boundedness by itself does not exclude freezing unless the sampling temperature is kept bounded away from zero. In the fixed-temperature regime with $T>0$, $\ell_2$ regularization indirectly prevents freezing by keeping $\beta^{\mathrm{field}}_{t}$ bounded. If the temperature were allowed to approach zero, freezing could arise even when the parameters are bounded. The adaptive thermodynamic mechanism introduced below handles this scale interaction by regulating $T_t$ and thereby controlling $\beta^{\mathrm{field}}_{t}$ within the forward-invariant neighborhood.
\end{remark}

\subsection{Local Stability of the Feedback-Controlled Regime \textnormal{(Proof of Claim~\ref{claim:local_stability_upon_regulation})}}

In the stability analysis, we interpret the expected flip rate as $r = r(\theta,\lambda)$. For the purpose of local thermodynamic stability analysis, we freeze the parameter vector at $\theta^*$ and define the induced mapping $r(\lambda) := r(\theta^*,\lambda)$, replacing the empirical quantity $r_t$ with its mean-field counterpart. This corresponds to a standard time-scale separation approximation in two-time-scale stochastic dynamics.

\begin{assumption}[Local Lipschitz Regularity of the Flip-Rate Function]
There exists a forward-invariant neighborhood $\mathcal{V}$ of $\lambda^*$ and a constant $L>0$ such that $|r(\lambda_1) - r(\lambda_2)| \le L |\lambda_1 - \lambda_2|$ for all $\lambda_1,\lambda_2 \in \mathcal{V}$.
\end{assumption}

We now examine the adaptive temperature dynamics as a two-dimensional discrete-time system. Let $(\lambda^*, c^*)$ denote an operating point satisfying $r(\theta^*,\lambda^*) = c^*$. Define variations from the operating point by $x_t = \lambda_t - \lambda^*$ and $y_t = c_t - c^*$.

\begin{remark}
The inequality $\boldsymbol{\phi} + \eta_\lambda L < 1$ can be interpreted as a small-gain condition ensuring that temperature feedback does not amplify the local sensitivity of the stochastic dynamics.
\end{remark}

While the previous theorem constitutes a sufficient small-gain condition for local stability, the system admits an exact characterization via linearization. We therefore examine the Jacobian of the deterministic mean-field map underlying the adaptive dynamics. Recall that the macroscopic update can be written as $F(\lambda,c)=\big(\boldsymbol{\phi} \lambda - \eta_\lambda (r(\lambda)-c),\, (1-\alpha)c + \alpha r(\lambda)\big)$. Assume additionally that $r$ is differentiable at $\lambda^*$ and let $s := \partial_\lambda r(\theta^*,\lambda^*)$.

\begin{theorem}[Jacobian Stability Characterization]
The operating point $(\lambda^*,c^*)$ is locally exponentially stable if and only if the eigenvalues of the Jacobian matrix $J=\begin{pmatrix}\boldsymbol{\phi} - \eta_\lambda s & \eta_\lambda \\ \alpha s & 1-\alpha\end{pmatrix}$ lie strictly inside the unit disk.
\end{theorem}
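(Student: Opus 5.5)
The plan is to apply the standard linearization principle for discrete-time maps (Lyapunov's indirect method, e.g.\ \cite{khalil2002nonlinear,vidyasagar2002nonlinear}) to the frozen-parameter mean-field map $F(\lambda,c)$. The argument has three steps.

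\textbf{Step 1: the fixed point and the Jacobian.} First verify that $(\lambda^*,c^*)$ is a genuine fixed point of $F$. The second component gives $(1-\alpha)c^*+\alpha r(\lambda^*)=c^*$ exactly when $r(\lambda^*)=c^*$. The first component then requires $\boldsymbol{\phi}\lambda^*=\lambda^*$, i.e.\ $\lambda^*=0$ when $\boldsymbol{\phi}<1$, or an arbitrary $\lambda^*$ when $\boldsymbol{\phi}=1$. I will make this implicit in the notion of operating point.

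Next compute $DF(\lambda^*,c^*)$ directly:
\[
\partial_\lambda F_1=\boldsymbol{\phi}-\eta_\lambda s,\quad \partial_c F_1=\eta_\lambda,\quad \partial_\lambda F_2=\alpha s,\quad \partial_c F_2=1-\alpha .
\]
This recovers $J$. In the shifted variables $(x_t,y_t)$ the dynamics read $z_{t+1}=Jz_t+R(z_t)$ with $R(z)=o(\|z\|)$. Mere differentiability of $r$ at $\lambda^*$ gives only this $o(\|z\|)$ remainder. For the converse direction I will strengthen this to $r\in C^1$ near $\lambda^*$, which is consistent with the Lipschitz assumption above.

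\textbf{Step 2: sufficiency.} Suppose the spectral radius satisfies $\rho(J)<1$. Pick a matrix norm adapted to $J$, or solve the discrete Lyapunov equation $J^\top PJ-P=-I$ with $P\succ0$. Taking $V(z)=z^\top Pz$ gives
\[
V(z_{t+1})\le (1-\kappa)V(z_t)
\]
on a small ball, because the remainder $R$ is dominated there. Two things follow: a sublevel set of $V$ is forward invariant, which supplies the set $\mathcal{V}$, and $\|z_t\|\le K\mu^t\|z_0\|$ for some $\mu<1$, which is local exponential stability.

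\textbf{Step 3: necessity, the main obstacle.} Suppose the equilibrium is locally exponentially stable. Then by the converse Lyapunov theorem for discrete maps (Khalil, Thm.\ 4.15 analogue) there is a quadratic-type $V$ that decreases geometrically along trajectories. The cleaner route is to argue by contradiction.
\begin{itemize}[label={}]
\item \emph{Eigenvalue with $|\mu|>1$.} The unstable manifold theorem yields trajectories leaving every neighborhood, a contradiction.
\item \emph{Eigenvalue with $|\mu|=1$.} This is the subtle case. Exponential decay $\|z_t\|\le K\nu^t\|z_0\|$ for the nonlinear map, combined with the $C^1$ remainder, implies the same estimate for the linear iteration $J^t$. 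To see this, apply the estimate to small initial data $\epsilon z_0$ and compare the nonlinear and linear orbits over a finite horizon $N$ with $K\nu^N<1/2$. This forces $\|J^N\|<1$, hence $\rho(J)<1$.
\end{itemize}

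I expect this finite-horizon comparison to be the step requiring care. I will state explicitly that the "only if" direction needs $r$ to be $C^1$ (not merely differentiable) near $\lambda^*$.

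Finally, as a useful corollary I will record the Jury conditions for the $2\times2$ case:
\[
|\det J|<1,\qquad |\operatorname{tr}J|<1+\det J,
\]
where $\det J=(\boldsymbol{\phi}-\eta_\lambda s)(1-\alpha)-\alpha\eta_\lambda s$. These make the criterion checkable in terms of $(\boldsymbol{\phi},\eta_\lambda,\alpha,s)$.
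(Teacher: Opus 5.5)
Your proposal follows the paper's route: differentiate $F$ at the operating point, invoke the discrete-time linearization principle, and read off the Jury conditions. The difference is that you prove both directions of the linearization equivalence that the paper only cites, and your observation that a fixed point of $F$ must also satisfy $(1-\boldsymbol{\phi})\lambda^*=0$ is a genuine correction to the paper's definition of operating point. One refinement: your finite-horizon comparison uses only $F^N(z)=J^N z+o(\|z\|)$, which follows from differentiability of $r$ at $\lambda^*$ alone by composing the $o(\|z\|)$ remainders, so the ``only if'' direction needs neither the $C^1$ strengthening nor the unstable-manifold step.
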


\begin{proof}
Differentiating $F$ yields $\partial_\lambda F_1=\boldsymbol{\phi} -\eta_\lambda r'(\lambda)$, $\partial_c F_1=\eta_\lambda$, $\partial_\lambda F_2=\alpha r'(\lambda)$, and $\partial_c F_2=1-\alpha$, where $r'(\lambda)$ denotes $\partial_\lambda r(\theta^*,\lambda)$. Evaluating at $(\lambda^*,c^*)$ gives the matrix of the Jacobian with $s := r'(\lambda^*)$. By discrete-time linearization theory, local exponential stability of the nonlinear system is equivalent to asymptotic stability of the linearized model $z_{t+1}=Jz_t$ (see, e.g., \cite{vidyasagar2002nonlinear}). For a two-dimensional discrete-time linear system, Schur stability is equivalent to the Jury conditions $|\det J|<1$, $1+\operatorname{tr}(J)+\det(J)>0$, and $1-\operatorname{tr}(J)+\det(J)>0$. A straight computation yields $\operatorname{tr}(J)=\boldsymbol{\phi} -\eta_\lambda s+1-\alpha$ and $\det(J)=(\boldsymbol{\phi} -\eta_\lambda s)(1-\alpha)-\eta_\lambda\alpha s=\boldsymbol{\phi} (1-\alpha)-\eta_\lambda s$, which completes the proof.
\end{proof}

Under the differentiability assumption on $r$ and the spectral condition $\rho(J) < 1$, the fixed point $(\lambda^*,c^*)$ is locally exponentially stable for the mean-field thermodynamic dynamics. Consequently, there exists a forward-invariant neighborhood $\mathcal{V}$ of $(\lambda^*,c^*)$ such that trajectories initialized in $\mathcal{V}$ converge exponentially to the operating point. Standard results ensure existence of a local Lyapunov function.

\begin{remark}
The identity $\det(J)=\boldsymbol{\phi}(1-\alpha)-\eta_\lambda s$ makes the feedback structure transparent: $s=r'(\lambda^*)$ measures local stochastic sensitivity, $\eta_\lambda$ is the feedback gain, and $\boldsymbol{\phi} (1-\alpha)$ represents intrinsic damping. Instability occurs precisely when the amplification term $\eta_\lambda s$ exceeds this damping level.
\end{remark}

\begin{proposition}[Stochastic Boundedness]
Within the local mean-field approximation where the parameter vector is fixed at $\theta^*$, assume that the empirical flip rate can be written as $r_t = r(\theta^*,\lambda_t) + \xi_t$, where $\mathbb{E}[\xi_t \mid \lambda_t] = 0$ and $|\xi_t| \le \delta$ almost surely for some $\delta > 0$. If the mean-field Jacobian satisfies $\rho(J) < 1$, then the stochastic thermodynamic trajectory $(\lambda_t, c_t)$ is almost surely bounded.
\end{proposition}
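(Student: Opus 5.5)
The proof splits into an elementary global part and a local part that uses the Jacobian condition. I would work pathwise: ``almost surely bounded'' then follows because every bound holds on the event $\{|\xi_t|\le\delta\ \forall t\}$, which has probability one. The martingale-difference property $\mathbb{E}[\xi_t\mid\lambda_t]=0$ is not needed for this argument.

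\emph{Step 1: the $c$-component is always bounded.} Since $r_t\in[0,1]$ and $c_0\in[0,1]$, the recursion $c_{t+1}=(1-\alpha)c_t+\alpha r_t$ is a convex combination. By induction $c_t\in[0,1]$ for all $t$, with no use of $\rho(J)<1$.

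\emph{Step 2: the $\lambda$-component when $\boldsymbol{\phi}<1$.} Since $r_t,c_t\in[0,1]$, we have $|r_t-c_t|\le 1$, so
\begin{equation*}
|\lambda_{t+1}|\le \boldsymbol{\phi}|\lambda_t|+\eta_\lambda .
\end{equation*}
Iterating exactly as in the proof of Theorem~\ref{thm:globalboundness} gives
\begin{equation*}
\sup_t|\lambda_t|\le \max\bigl(|\lambda_0|,\,\eta_\lambda/(1-\boldsymbol{\phi})\bigr).
\end{equation*}
This bound holds surely, so the claim follows in this case.

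\emph{Step 3: the case $\boldsymbol{\phi}=1$.} Here the crude bound fails, and the spectral condition $\rho(J)<1$ must be used.
\begin{itemize}
\item Write $z_t=(\lambda_t-\lambda^*,\,c_t-c^*)$. The perturbed map is
\begin{equation*}
z_{t+1}=Jz_t+g(z_t)+B\xi_t,\qquad B=(-\eta_\lambda,\ \alpha)^\top ,
\end{equation*}
where $g(z)=o(\|z\|)$ by differentiability of $r$ at $\lambda^*$.
\item Since $\rho(J)<1$, fix $\bar\rho\in(\rho(J),1)$. Choose a norm $\|\cdot\|_P$ (for instance from the discrete Lyapunov equation $J^\top PJ-P=-I$, or an adapted norm) with $\|J\|_P\le\bar\rho$.
\item Choose $r_0>0$ such that $\|g(z)\|_P\le\varepsilon\|z\|_P$ on the ball $\{\|z\|_P\le r_0\}$, where $\bar\rho+\varepsilon<1$.
\item On that ball,
\begin{equation*}
\|z_{t+1}\|_P\le(\bar\rho+\varepsilon)\|z_t\|_P+\|B\|_P\,\delta .
\end{equation*}
So the ball of radius $R$ is forward invariant as soon as $\|B\|_P\,\delta\le(1-\bar\rho-\varepsilon)R$ with $R\le r_0$.
\item Iterating gives $\|z_t\|_P\le R$ for all $t$, and hence $\sup_t(|\lambda_t|+|c_t|)<\infty$ almost surely.
\end{itemize}

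\emph{Main obstacle.} The difficulty is Step 3. There the argument is only local: it needs $\delta$ small relative to the basin radius $r_0$, and it needs $(\lambda_0,c_0)$ to lie in the invariant ball. Both conditions are implicit in the proposition, which is stated ``within the local mean-field approximation'' and for trajectories in $\mathcal{V}$. If $\delta$ is not small, $\xi_t$ could push $z_t$ outside the region where linearization controls $g$.

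To handle this, I would first state the result with an explicit smallness condition $\delta\le\delta_0:=(1-\bar\rho-\varepsilon)r_0/\|B\|_P$. I would then try to remove it using the a priori control of $c_t\in[0,1]$ from Step 1 together with the sign structure of the feedback:
\begin{itemize}
\item if $r$ is monotone in $\lambda$, then $-\eta_\lambda(r_t-c_t)$ pushes $\lambda$ back toward the region where $r(\lambda)\approx c$;
\item this would give a global drift bound on $\lambda$ outside a compact set.
\end{itemize}
If that refinement fails, the local version with $\delta\le\delta_0$ is the honest statement.
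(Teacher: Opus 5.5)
Your Steps 1 and 2 are correct, and they already prove the proposition. The case $\boldsymbol{\phi}=1$ that you treat as the main obstacle is excluded by the hypothesis. The characteristic polynomial $p(\mu)=\mu^2-\operatorname{tr}(J)\mu+\det(J)$ satisfies
\[
p(1)=1-(\boldsymbol{\phi}-\eta_\lambda s+1-\alpha)+\boldsymbol{\phi}(1-\alpha)-\eta_\lambda s=\alpha(1-\boldsymbol{\phi}).
\]
So for $\boldsymbol{\phi}=1$, $\mu=1$ is always an eigenvalue of $J$, with left eigenvector $(1,\eta_\lambda/\alpha)$. Hence $\rho(J)<1$ forces $\boldsymbol{\phi}<1$; this is exactly the Jury condition $1-\operatorname{tr}(J)+\det(J)>0$. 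You can therefore drop Step 3, the smallness condition $\delta\le\delta_0$, and the monotonicity refinement. Your fallback that the ``honest'' statement is local is unwarranted.

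Even outside the proposition's hypothesis, at $\boldsymbol{\phi}=1$ (the value used in the experiments) that left eigenvector gives an exact conservation law of the nonlinear stochastic recursion. For any $r_t$,
\[
\lambda_{t+1}+\tfrac{\eta_\lambda}{\alpha}c_{t+1}=\lambda_t+\tfrac{\eta_\lambda}{\alpha}c_t ,
\]
so your Step 1 yields $|\lambda_t-\lambda_0|\le\eta_\lambda/\alpha$ surely.

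The paper takes a different route. It writes $z_{t+1}=Jz_t+\eta_t$ with bounded $\eta_t$ and sums the geometric series from $\|J^t\|\le M\beta^t$. This treats the recursion as exactly linear. The true remainder, e.g. $g_1(z)=-\eta_\lambda\bigl(r(\lambda^*+x)-r(\lambda^*)-sx\bigr)$, is $o(\|z\|)$ only near the operating point. Because $r$ is bounded, $g_1$ globally contains the term $\eta_\lambda s x$, linear in $x=\lambda_t-\lambda^*$, so it cannot be absorbed into a bounded forcing. Made rigorous, the paper's argument becomes your local Step 3, with its restrictions on $\delta$ and on the initial condition.

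Your Steps 1--2 instead give a global, deterministic bound, $\sup_t|\lambda_t|\le\max\bigl(|\lambda_0|,\eta_\lambda/(1-\boldsymbol{\phi})\bigr)$ and $c_t\in[0,1]$. It uses only $r_t,c_0\in[0,1]$ and $\boldsymbol{\phi}<1$, and needs neither $\mathbb{E}[\xi_t\mid\lambda_t]=0$ nor $\delta$. What the localized Jacobian argument buys in return is a sharper estimate centered at the operating point. In the linearized model the deviation from $(\lambda^*,c^*)$ is eventually $O(\delta)$, whereas your bound degrades as $\boldsymbol{\phi}\to1$.
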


\begin{proof}
Let $z_t = (\lambda_t - \lambda^*,\, c_t - c^*)^\top$ so that $z_{t+1}=Jz_t+\eta_t$ with bounded $\eta_t$; the condition $\rho(J)<1$ implies the existence of deterministic constants $M>0$ and $\beta\in(0,1)$ such that $\|J^t\|\le M\beta^t$, hence $\|z_t\|\le M\beta^t\|z_0\|+\sum_{k=0}^{t-1}M\beta^{t-1-k}\|\eta_k\|$, which is uniformly bounded, and therefore $\sup_{t \ge 0}\|(\lambda_t,c_t)\|<\infty$ almost surely.
\end{proof}

Given that the thermodynamic factors are locally bounded and the temperature stays bounded away from zero within $\mathcal{V}$, the effective inverse temperature cannot diverge, and the freezing mechanism characterized in Theorem~\ref{thm:freezing} cannot be triggered. In this direction, under local Lipschitz regularity of the CD operator together with time-scale separation, the parameter trajectory $\theta_t$ stays bounded within the forward-invariant neighborhood $\mathcal{V}$, thereby confirming Claim~\ref{claim:local_stability_upon_regulation} \textit{(a stabilization property of the regulated learning dynamics)}.

\subsection{Macroscopic Thermodynamic Stabilization}

In the code implementation, the quantity $\bar{\Delta F}_t$ in the temperature rule $T_t = e^{\lambda_t} + \kappa \bar{\Delta F}_t$ is computed as a running average of the epoch-level free-energy gap; for theoretical clarity, we analyze instead the cumulative energy gap $\bar{\mathcal R}_t$. It is asymptotically equivalent to $\bar{\Delta F}_t$ under bounded trajectories of temperature and parameters (see, e.g., \cite{hardy1949divergent}). Now, we introduce a macroscopic thermodynamic coherence rule $T_t = \kappa \bar{\mathcal R}_t$, where $\bar{\mathcal R}_t = \frac{1}{t} \sum_{k=1}^{t} \mathcal R_k$ and $\mathcal R_t = \left| \mathbb{E}_{\mathrm{data}}\!\left[E_{\theta_t}\right] - \mathbb{E}_{\mathrm{model},t}\!\left[E_{\theta_t}\right] \right|$, with $\kappa > 0$ controlling the strength of energy–temperature coupling; because $T_t$ evolves endogenously across epochs, it is not a fixed element of $\mathcal{H}$.

\begin{theorem}[Macroscopic Thermodynamic Coherence]
\label{thm:macroscopic_coherence}

Suppose:

\begin{enumerate}[label=\Alph*., resume]
\item The parameter trajectory $\{\theta_t\}$ belongs to a compact subset of $\Theta$.
\item The learning rate $\eta$ is sufficiently small.
\item The stochastic gradient noise has a bounded second moment.
\item The Contrastive Divergence updates satisfy stochastic approximation conditions.
\end{enumerate}

Then the following statements hold:

\begin{enumerate}[label=\Alph*., resume]
\item \textbf{Global boundedness of temperature.} 
$\sup_{t \ge 1} T_t < \infty$. Bounded parameters imply bounded energies, which imply bounded $\mathcal R_t$, bounded $\bar{\mathcal R}_t$, and therefore bounded $T_t$.

\item \textbf{Energy--temperature equivalence.} 
$T_t \longrightarrow 0$ if and only if $\bar{\mathcal R}_t \longrightarrow 0$. Since $T_t = \kappa \bar{\mathcal R}_t$ with $\kappa > 0$, vanishing temperature is equivalent to vanishing cumulative energy imbalance.

\item \textbf{Persistence under macroscopic energy imbalance.} 
If there exists $\varepsilon > 0$ such that $\liminf_{t \longrightarrow \infty} \mathcal R_t \ge \varepsilon$, then $\liminf_{t \longrightarrow \infty} \bar{\mathcal R}_t \ge \varepsilon$, and therefore $\liminf_{t \longrightarrow \infty} T_t \ge \kappa \varepsilon > 0$. Since $\bar{\mathcal R}_t$ is the Ces\`aro average of the nonnegative sequence $\{\mathcal R_t\}$, a positive lower bound on $\mathcal R_t$ implies the same lower bound on $\bar{\mathcal R}_t$.
\end{enumerate}
\end{theorem}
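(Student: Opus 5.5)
The three conclusions rest only on the definition $T_t=\kappa\bar{\mathcal R}_t$ with $\kappa>0$, the Ces\`aro structure of $\bar{\mathcal R}_t$, and the compactness hypothesis on $\{\theta_t\}$. Hypotheses B--D (small $\eta$, bounded noise, stochastic approximation conditions) are not needed beyond guaranteeing that the trajectory stays in the compact set of Hypothesis A. I will treat the three statements in order, each reducing to an elementary estimate.

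For \emph{global boundedness}, let $K\subset\Theta$ be the compact set containing $\{\theta_t\}$. For a binary RBM the configuration space $\Omega=\{0,1\}^{n_v}\times\{0,1\}^{n_h}$ is finite. The energy $E_\theta(v,h)=-v^\top W h-b^{(v)\top}v-b^{(h)\top}h$ is continuous (indeed affine) in $\theta$ for each fixed $(v,h)$. Hence
\[
B:=\max_{\theta\in K}\max_{(v,h)\in\Omega}|E_\theta(v,h)|<\infty .
\]
Any expectation of $E_{\theta_t}$, whether under the data distribution (with hidden units sampled or replaced by their conditional means in $(0,1)$) or under the model/negative-phase distribution at epoch $t$, is a convex combination of values bounded by $B$. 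Therefore $\mathcal R_t\le 2B$ for every $t$. Averaging gives $\bar{\mathcal R}_t\le 2B$, so $\sup_{t\ge1}T_t\le 2\kappa B<\infty$. The one point needing care is that $E_{\mathrm{data}}[E_{\theta_t}]$ involves hidden variables. I will fix the convention that hidden units are either sampled or mean-field, and note that the bound holds in both cases, since any version of the energy expression with $h\in[0,1]^{n_h}$ remains bounded on $K$.

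For \emph{energy--temperature equivalence}, note that $T_t=\kappa\bar{\mathcal R}_t$ with $\kappa>0$ gives $|T_t|=\kappa|\bar{\mathcal R}_t|$. Thus $T_t\to0$ if and only if $\bar{\mathcal R}_t\to0$, by scaling the $\varepsilon$ in the definition of convergence by $\kappa$.

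For \emph{persistence}, assume $\liminf_t\mathcal R_t\ge\varepsilon$. Fix $\varepsilon'<\varepsilon$ and choose $t_0$ such that $\mathcal R_k\ge\varepsilon'$ for all $k>t_0$. Since $\mathcal R_k\ge0$, for $t>t_0$
\[
\bar{\mathcal R}_t\ge \frac{1}{t}\sum_{k=t_0+1}^{t}\mathcal R_k\ge \frac{t-t_0}{t}\,\varepsilon' .
\]
Letting $t\to\infty$ gives $\liminf_t\bar{\mathcal R}_t\ge\varepsilon'$, and then letting $\varepsilon'\uparrow\varepsilon$ gives $\liminf_t\bar{\mathcal R}_t\ge\varepsilon$. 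This is the standard fact that Ces\`aro means preserve a $\liminf$ lower bound. Multiplying by $\kappa$ yields $\liminf_t T_t\ge\kappa\varepsilon>0$.

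The main obstacle is the first part: one must make precise that $\mathcal R_t$ is bounded uniformly in $t$ given only that $\theta_t$ lies in a compact set. The difficulty is that the model expectation is taken under a finite-time, temperature-dependent sampling distribution rather than the true Boltzmann law. The argument above avoids this entirely by bounding the energy pointwise over the finite state space, so no property of the sampling distribution is used.
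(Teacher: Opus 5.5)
Your proof is correct and follows the same chain as the paper's. Bounded parameters together with the linearity of $E_\theta$ in $\theta$ give bounded energies, hence bounded $\mathcal R_t$, $\bar{\mathcal R}_t$ and $T_t$, while the equivalence and persistence statements follow directly from $\kappa>0$ and the Ces\`aro structure of $\bar{\mathcal R}_t$. The one difference is the source of parameter boundedness: the paper invokes Theorem~\ref{thm:globalboundness}, which needs positive $\ell_2$ regularization, and that is not among this theorem's listed hypotheses. You instead use the compactness hypothesis directly, which matches the statement better, and you also make explicit the pointwise bound over the finite state space and the Ces\`aro $\liminf$ estimate that the paper only asserts.
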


\begin{proof}
Positive $\ell_2$-regularization is assumed, together with fixed architecture $\mathcal{A}$, dataset $\mathcal{D}$, and hyperparameters $\mathcal{H}$. The temperature is not an element of $\mathcal{H}$; here, it evolves endogenously as a state variable of the learning dynamics. It is true that Theorem~\ref{thm:globalboundness} guarantees that $\sup_t \|\theta_t\|_2 < \infty$, independently of the temperature trajectory. The restricted Boltzmann machine energy is linear in $\theta$, so bounded parameters imply bounded energies. Accordingly, $\mathcal R_t$ and its Ces\`aro averages $\bar{\mathcal R}_t$ are bounded. Therefore, the temperature sequence $T_t = \kappa \bar{\mathcal R}_t$ is bounded, and statements (A)--(C) are precisely true.
\end{proof}

\begin{remark}[Conditional Nature]
The boundedness of the temperature sequence depends on the boundedness of the parameter trajectory $\{\theta_t\}$. The boundedness of $\{\theta_t\}$ is assumed here and is not derived from the fixed-temperature CD dynamics. The cumulative temperature rule guarantees thermodynamic coherence but, by itself, does not ensure boundedness of the parameter behavior. A fully autonomous global stability analysis of the coupled parameter–temperature system seems to be open.
\end{remark}

\begin{theorem}[Macroscopic Scaling Prevents Inverse-Temperature Blow-Up]
\label{thm:macro_drift_control}
Let the temperature evolve according to $T_t = \kappa \bar{\mathcal R}_t$, where $\bar{\mathcal R}_t = \frac{1}{t}\sum_{k=1}^{t} \mathcal R_k$ and $\kappa>0$, and define the effective inverse temperature $\beta^{\mathrm{norm}}_t = \frac{\|\theta_t\|_2}{T_t}$. Assume that whenever $\|\theta_t\|_2 \to \infty$, there exists $\varepsilon>0$ such that $\liminf_{t\to\infty}\mathcal R_t \ge \varepsilon$. Then the sequence $(\beta^{\mathrm{norm}}_t)_{t\ge 0}$ is uniformly bounded, i.e., $\sup_t \beta^{\mathrm{norm}}_{t} < \infty$.
\end{theorem}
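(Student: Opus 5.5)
The plan is to split into two cases according to whether $\|\theta_t\|_2$ stays bounded, and in each case control numerator and denominator of $\beta^{\mathrm{norm}}_t=\|\theta_t\|_2/T_t$ separately.

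\emph{Case 1: bounded parameters.} Suppose $\sup_t\|\theta_t\|_2=:B<\infty$. This holds automatically under strictly positive $\ell_2$ regularization by Theorem~\ref{thm:globalboundness}, since that bound does not depend on the temperature schedule. Then the numerator is at most $B$, and it remains to bound $T_t$ away from zero. Here I would use the Ces\`aro structure of $\bar{\mathcal R}_t$. Because $\mathcal R_k\ge0$, we have $\bar{\mathcal R}_t\ge \mathcal R_1/t$, but this alone decays. So I would need a positive lower bound on $\liminf_t\bar{\mathcal R}_t$, which is exactly statement (C) of Theorem~\ref{thm:macroscopic_coherence}: it applies as soon as $\liminf\mathcal R_t\ge\varepsilon>0$. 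Since all finitely many early terms have $T_t>0$ (assuming $\mathcal R_1>0$), a positive $\liminf$ gives $\inf_t T_t>0$. Then $\beta^{\mathrm{norm}}_t\le B/\inf_tT_t$.

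\emph{Case 2: diverging parameters.} Suppose $\|\theta_t\|_2\to\infty$. The hypothesis gives $\liminf\mathcal R_t\ge\varepsilon$, and Theorem~\ref{thm:macroscopic_coherence}(C) gives $\liminf T_t\ge\kappa\varepsilon$. The denominator is therefore eventually bounded below, but the numerator is unbounded. For this case I would exploit the linearity of the RBM energy in $\theta$. The quantity $\mathcal R_k$ is the absolute value of a linear functional of $\theta_k$, namely $\langle \theta_k, \mathbb E_{\mathrm{data}}[\phi]-\mathbb E_{\mathrm{model},k}[\phi]\rangle$ with bounded sufficient statistics $\phi=(vh^\top,v,h)$. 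I would try to show that along the divergent trajectory $\mathcal R_k\gtrsim c\|\theta_k\|_2$; the natural route is the drift picture of Theorem~\ref{theo:freezing_drift}, where $\theta_k$ grows along $D-C$. Then $\bar{\mathcal R}_t$ grows like the Ces\`aro mean of $\|\theta_k\|$. For at most linear growth, which is all the CD recursion allows since $\|G_t\|\le C$, this mean is comparable to $\|\theta_t\|/2$, so the ratio stays bounded.

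The main obstacle is Case 2, and possibly also the lower bound on $T_t$ in Case 1. The stated hypothesis only supplies a constant lower bound $\varepsilon$ on $\mathcal R_t$, and that cannot by itself control a growing numerator. I would therefore first try to extract the proportional bound $\mathcal R_k\ge c\|\theta_k\|$ from the structure of the energy gap, using $\|\theta_{t+1}-\theta_t\|\le\eta C'$ to compare $\|\theta_k\|$ with $\|\theta_t\|$ for $k\ge t/2$. If that fails, I would fall back on the observation that the intended setting has positive $\ell_2$ regularization. In that setting Case 2 is vacuous by Theorem~\ref{thm:globalboundness}, and the result reduces to Case 1 together with the standing energy-imbalance lower bound.
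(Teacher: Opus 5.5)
Neither branch of your plan can be completed from the stated hypothesis, because the statement does not hold as written.

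\textbf{Where your plan breaks down.} In Case~1 the hypothesis is vacuous, since it only constrains trajectories with $\|\theta_t\|_2\to\infty$. Nothing supplies $\liminf_t\mathcal R_t\ge\varepsilon$ or $\mathcal R_1>0$, so Theorem~\ref{thm:macroscopic_coherence}(C) has no input. In Case~2 you correctly note that $\liminf_t T_t\ge\kappa\varepsilon$ cannot control a diverging numerator. The proportional bound $\mathcal R_k\gtrsim\|\theta_k\|_2$ is not implied by anything assumed. Your dichotomy also omits trajectories with $\limsup_t\|\theta_t\|_2=\infty>\liminf_t\|\theta_t\|_2$, about which the hypothesis says nothing. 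Your fallback ($\ell_2$ regularization plus an unconditional lower bound on $\mathcal R_t$) proves a different, strengthened statement.

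\textbf{Counterexamples.} Take $\theta_t\equiv\theta_0\neq0$ with $\mathcal R_1=1$ and $\mathcal R_k=0$ for $k\ge2$. The hypothesis holds vacuously, yet $T_t=\kappa/t$ and $\beta^{\mathrm{norm}}_t=t\|\theta_0\|_2/\kappa\to\infty$. Now take $\|\theta_t\|_2=at$ with $\mathcal R_t\equiv\varepsilon$. The hypothesis holds non-vacuously, yet $T_t\equiv\kappa\varepsilon$ and $\beta^{\mathrm{norm}}_t=at/(\kappa\varepsilon)\to\infty$. Linearity of the energy alone does not rescue this second case. Suppose $\mathcal R_t=|\langle\theta_t,G_t\rangle|$ with $\theta_{t+1}=\theta_t+\eta G_t$, and a bounded $G_t$ rotates so that $\langle\theta_t,G_t\rangle\equiv\varepsilon$. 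Then $\|\theta_t\|_2^2\ge\|\theta_0\|_2^2+2\eta\varepsilon t$ while $\bar{\mathcal R}_t\equiv\varepsilon$.

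\textbf{The paper's proof.} It gets past the obstruction you identified only through an invalid step. It takes a subsequence with $\|\theta_{t_j}\|_2\to\infty$ and $\beta^{\mathrm{norm}}_{t_j}\to\infty$, and asserts that this requires $\bar{\mathcal R}_{t_j}\to0$. That is false when the numerator diverges. The proof also applies a hypothesis about divergent full sequences to a subsequence, and never treats bounded $\theta_t$ with $T_t\to0$. Your diagnosis is therefore more accurate than the published argument.

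\textbf{What would make it true.} An extra structural assumption is needed, and your Case~2 idea is the right one once it is made. Suppose the energy gap is evaluated on the same statistic gap $G_t$ that drives the unregularized update, and $G_t\to g\neq0$ (the linear-drift regime of Theorem~\ref{theo:freezing_drift}, now assumed directly). Then $\theta_t=\theta_0+t\eta g+o(t)$, $\mathcal R_t=t\eta\|g\|_2^2+o(t)$, and $\bar{\mathcal R}_t=\tfrac12 t\eta\|g\|_2^2+o(t)$. Hence $\beta^{\mathrm{norm}}_t\to 2/(\kappa\|g\|_2)$, and $\sup_t\beta^{\mathrm{norm}}_t<\infty$ provided $\bar{\mathcal R}_t>0$ at every finite $t$. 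More simply, the conclusion is immediate under a hypothesis such as $\inf_t\bar{\mathcal R}_t/(1+\|\theta_t\|_2)>0$, which gives $\beta^{\mathrm{norm}}_t<1/(\kappa c)$ with $c$ that infimum. A condition of this kind is what the theorem would need to state.
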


\begin{proof}
Suppose, by contradiction, that there exists a subsequence $t_j$ such that $\|\theta_{t_j}\|_2 \to \infty$ and $\beta^{\mathrm{norm}}_{t_j} \to \infty$. Since $T_{t_j}=\kappa \bar{\mathcal R}_{t_j}$, we have $\beta^{\mathrm{norm}}_{t_j}=\|\theta_{t_j}\|_2/(\kappa \bar{\mathcal R}_{t_j})$, so divergence requires $\bar{\mathcal R}_{t_j}\to 0$. However, the assumption $\liminf_{t\to\infty}\mathcal R_t \ge \varepsilon>0$ implies $\liminf_{t\to\infty}\bar{\mathcal R}_t \ge \varepsilon$ (since $\bar{\mathcal R}_t$ is the Ces\`aro average of a nonnegative sequence), hence $T_t$ cannot converge to zero, yielding a contradiction. Therefore $\sup_t \beta^{\mathrm{norm}}_t<\infty$.
\end{proof}

Such a condition is natural in linear energy parameterizations, except in the trivial case where model and data statistics match exactly along the diverging parameter direction. In that degenerate case, the gradient would vanish, and divergence would not persist.

\begin{remark}
The divergence mechanism in Theorem~\ref{theo:freezing_drift} operates under the freezing regime established in Theorem~\ref{thm:freezing}, which requires $\beta^{\mathrm{field}}_{t} \longrightarrow \infty$. Since macroscopic thermodynamic scaling keeps $\beta^{\mathrm{norm}}_{t}$ bounded, the inverse-temperature blow-up required for freezing-induced linear drift cannot arise.
\end{remark}

\section{Model and Equilibrium-Guided Training}

RBMs are part of the energy-based model family, which draws on principles of statistical mechanics. In this framework, the model assigns lower energy values to data configurations, making them more probable under a Gibbs distribution. Learning, then, is the process of shaping an energy landscape in which the data points reside in stable, low-energy valleys \cite{ising1925beitrag, metropolis1953equation, ackley1985boltzmann, smolensky1986harmony, lecun2006tutorial}. Now, we consider a binary Restricted Boltzmann Machine (RBM) consisting of a visible layer $v \in {0,1}^{n_v}$ and a hidden layer $h \in {0,1}^{n_h}$, with symmetric weights $W \in \mathbb{R}^{n_v \times n_h}$ and bias vectors $b_v$ and $b_h$. 

\begin{equation}
E(v,h) = - v^\top b_v - h^\top b_h - v^\top W h
\end{equation}

Marginalizing over hidden states yields the free energy of a visible configuration,

\begin{equation}
F_{T}(v) = - v^\top b_v - T \sum_{j=1}^{n_h} \log\left( 1 + e^{\frac{v^\top W_j + b_{h_j}}{T}}\right),
\end{equation}

which determines the model distribution over visible states \cite{ackley1985boltzmann, hinton2012practical}.

\subsection{Temperature-Controlled Gibbs Sampling}

Sampling in RBMs is performed using alternating Gibbs updates, corresponding to a Markov chain Monte Carlo procedure that targets the model’s Gibbs distribution \cite{metropolis1953equation, geman1984stochastic}. In finite-time training, however, the sampler's stochastic regime critically depends on the effective temperature at which transitions happen. To regulate this regime in a principled manner, we introduce an adaptive temperature parameter defined at epoch $t$ by

\begin{equation}
T_t = e^{\lambda_t} + \kappa \left|\mathbb{E}_{\mathrm{data}}\!\left[F_{\theta_t}(v)\right] - \mathbb{E}_{\mathrm{model},t}\!\left[F_{\theta_t}(v)\right]\right|, \qquad \kappa>0,
\end{equation}

where $\lambda_t \in \mathbb{R}$ is an unconstrained internal thermodynamic state variable and the second term represents a macroscopic correction proportional to the instantaneous free-energy imbalance between the data \& model distributions. The exponential component guarantees strict positivity of the temperature, ensuring $T_t > 0$ for all $t$, while permitting unconstrained evolution of the internal state variable. The additive macroscopic term adjusts the stochastic scale whenever a global discrepancy between actual and simulated energies persists. Under temperature scaling, the conditional activation probabilities of the Gibbs sampler become

\begin{equation}
p(h_j = 1 \mid v) = \frac{1}{1 + e^{-\frac{v^\top W_j + b_{h_j}}{T_t}}}, \qquad p(v_i = 1 \mid h) = \frac{1}{1 + e^{-\frac{h^\top W_i + b_{v_i}}{T_t}}}.
\end{equation}

Thus, the effective temperature directly rescales the magnitude of energy differences governing stochastic transitions. Lower temperatures sharpen the conditional distributions and promote nearly deterministic updates, whereas higher temperatures increase entropy and enhance exploratory mixing. The joint distribution of the restricted Boltzmann machine under temperature scaling takes the canonical Boltzmann–Gibbs form

\begin{equation}
p_{T_t}(v,h) = \frac{1}{Z(T_t)} e^{-\frac{E(v,h)}{T_t}},
\end{equation}

where $Z(T_t)$ denotes the partition function at temperature $T_t$. This Gibbs distribution admits a variational characterization: it is the unique minimizer of the Helmholtz free energy mapping

\begin{equation}
\mathcal{F}_{T_t}(p) = \mathbb{E}_p[E(v,h)] - T_t \mathcal{S}(p),
\end{equation}

where $\mathcal{S}(p) = -\sum_{v,h} p(v,h)\log p(v,h)$ is the Shannon entropy. The temperature, therefore, acts as a Lagrange multiplier regulating the trade-off between expected energy and entropy.

\subsection{Flip Dynamics along with Energy Stability}

We monitor flip statistics at the visible layer as an operational indicator of stochastic activity.
In bipartite RBMs, visible and hidden layers are coupled through alternating conditional Gibbs updates \cite{geman1984stochastic}; therefore, suppression of visible transitions suffices to detect freezing of the sampling dynamics. Hidden-layer flips can be incorporated analogously, but do not modify the stability argument and are omitted for simplicity. In the implementation based on Persistent Contrastive Divergence (PCD-$K$), the flip-rate statistic is computed from the persistent negative-phase chain. Let $v_t^{\mathrm{prev}}$ denote the visible persistent-chain state before the Gibbs updates of a minibatch during epoch $t$, and $v_t^{\mathrm{neg}}$ the state after the $K$ Gibbs steps. The minibatch flip rate is defined as

\begin{equation}
r_t^{(b)} = \frac{1}{n_v} \left\| v_t^{\mathrm{neg}} - v_t^{\mathrm{prev}}\right\|_0,
\end{equation}

and the epoch-level flip statistic is obtained through averaging over minibatches,

\begin{equation}
r_t = \mathbb{E}_{\mathrm{minibatch}} \left[r_t^{(b)}\right].
\end{equation}

Although $r_t$ is not directly related to the spectral gap or mixing time \cite{roberts1997weak}, vanishing flip rate implies suppression of coordinate-wise transition chances. As established in Theorem~\ref{thm:freezing} and Lemma~\ref{lem:conductance}, this dampening leads to conductance collapse and mixing degeneration in the sense of classical Markov chain theory \cite{metropolis1953equation}. The flip rate thus functions as a practical, effective surrogate for detecting incipient freezing in finite-time Gibbs training. To complement this microscopic diagnostic, we compute the mean absolute energy variation along the chain,

\begin{equation}
\overline{|\Delta E|} = \frac{1}{K-1} \sum_{k=2}^{K} \big| E^{(k)} - E^{(k-1)} \big|.
\end{equation}

At steady-state statistical mechanics, bounded energy fluctuations characterize thermodynamically regulated regimes, whereas irregular or explosive variations signal non-equilibrium drift. Monitoring $\overline{|\Delta E|}$ therefore captures energetic instability that may not immediately appear as coordinate freezing.

\subsection{Hybrid Thermodynamic Stabilization and Structural Closure}

Stabilization of local stochastic activity alone does not explicitly constrain cumulative global energy imbalance. Persistent differences between the expected data-free energy and the model-free energy may gradually rescale the energy surface, even if short-run stochastic transitions are regulated. To address this long-horizon effect, we introduce a macroscopic correction term based on the Ces\`aro-averaged free-energy gap, yielding the hybrid temperature rule,

\begin{equation}
T_t = e^{\lambda_t} + \kappa \,\overline{\Delta F}_t,
\qquad \kappa > 0.
\end{equation}

The hybrid rule jointly stabilizes microscopic sampling activity and macroscopic energy balance. In this formulation, the purely macroscopic scaling rule $T_t=\kappa \bar{\mathcal R}_t$ appears as a limiting case obtained when the microscopic feedback component $e^{\lambda_t}$ is suppressed. Accordingly, the implemented hybrid dynamics unify short-run stochastic regulation with long-run thermodynamic scaling within a single temperature evolution,

\begin{equation}
\beta^{\mathrm{field}}_{t,j}(v) = \frac{|x_{t,j}(v)|}{T_t},
\end{equation}

thereby blocking the freezing pathway identified in Theorems~\ref{thm:freezing}--\ref{theo:freezing_drift}. Importantly, the learning objective remains unchanged. Parameter revisions continue to follow the classical contrastive gradient structure of energy-based learning. We therefore obtain the following structural conclusion:

\begin{proposition}[Structural Stabilization under Hybrid Thermodynamic Control]
Assume the local Jacobian stability condition $\rho(J)<1$, bounded stochastic gradient noise, and sufficiently small learning rate $\eta$. There is a forward-invariant neighborhood in the augmented state space within which the hybrid temperature dynamics prevent freezing, conductance collapse, and linear parameter divergence induced by negative-phase localization.
\end{proposition}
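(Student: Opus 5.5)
The plan is to assemble the proposition from three results already established in the paper, using the two-time-scale (frozen-parameter) reduction to glue them together. The main tool on the thermodynamic side is the Jacobian Stability Characterization. Under $\rho(J)<1$ it gives a local Lyapunov function $V(z)=z^\top Pz$ with $J^\top PJ-P\prec0$ for the deviation $z_t=(\lambda_t-\lambda^*,c_t-c^*)$. I will take as candidate neighborhood a sublevel set
\[
\mathcal V=\{\theta:\|\theta-\theta^*\|_2\le\rho_\theta\}\times\{V(z)\le\rho_z\},
\]
with $\rho_z$ small enough that the linearization remainder of $F$ (controlled by the Lipschitz assumption on $r$) is dominated by the Lyapunov decrease.

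\textbf{Step 1 (forward invariance).} The $\theta$-component is handled as follows. With positive $\ell_2$ regularization, Theorem~\ref{thm:globalboundness} gives $\sup_t\|\theta_t\|_2<\infty$. Over any window of length $k$ the drift is $\|\theta_{t+k}-\theta_t\|=O(k\eta)$. The $(\lambda,c)$-component involves two perturbations:
\begin{itemize}
\item the noise $\xi_t$, bounded by $\delta$ as in the Stochastic Boundedness proposition;
\item the parameter drift, which moves the operating point $(\lambda^*(\theta),c^*(\theta))$ by $O(\eta)$ per epoch through the local Lipschitz property of $r$ in $\theta$.
\end{itemize}
The Lyapunov decrease $V(z_{t+1})\le(1-\gamma)V(z_t)+O(\delta+\eta)$ then shows that $\mathcal V$ maps into itself once $\eta$ and $\delta$ are small relative to $\gamma\rho_z$. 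This is the standard ISS/singular-perturbation argument.

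\textbf{Step 2 (no freezing, no conductance collapse).} Inside $\mathcal V$, $\lambda_t$ is bounded below, so the hybrid temperature satisfies $T_t\ge e^{\lambda_t}\ge e^{\lambda^*-c}>0$; the term $\kappa\overline{\Delta F}_t\ge0$ only raises it. Since $\theta_t$ is bounded, the effective fields satisfy $|x_{t,j}(v)|\le C_\theta$. Hence $\beta^{\mathrm{field}}_{t,j}(v)\le C_\theta/T_{\min}<\infty$ uniformly. This rules out the hypothesis of Theorem~\ref{thm:freezing}, so the flip probabilities satisfy $\sigma(1-\sigma)\ge\sigma(B)(1-\sigma(B))>0$. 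The same bound keeps every one-unit transition probability bounded below, so the edge flow in Lemma~\ref{lem:conductance} cannot vanish and $\Phi_t$ stays bounded away from zero. By the Cheeger remark, the spectral gap does as well.

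\textbf{Step 3 (no linear drift).} With $r_t$ bounded away from zero, the premise $r_t\to0$ of Theorems~\ref{theo:gradient_distortion} and \ref{theo:freezing_drift} fails, so negative-phase localization cannot occur. Independently, linear divergence contradicts Theorem~\ref{thm:globalboundness}.

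The main obstacle is Step 1. It requires a uniform (in $\theta$ near $\theta^*$) version of the Jacobian condition and Lipschitz continuity of the operating point in $\theta$; the paper only states these at the frozen $\theta^*$. I would first try a continuity argument: $s(\theta)=\partial_\lambda r(\theta,\lambda^*(\theta))$ varies continuously, so $\rho(J(\theta))<1$ on a small ball, with a common Lyapunov matrix. I would then shrink $\rho_\theta$, and choose $\eta$ small enough that $\theta_t$ cannot leave the ball within the thermodynamic convergence horizon. If a common $P$ fails, I would fall back to a slowly varying $P(\theta)$ and absorb the variation into the $O(\eta)$ term.
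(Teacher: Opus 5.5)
Your assembly is the paper's own argument. The paper takes $\mathcal V$ from the Jacobian characterization and the stochastic-boundedness proposition, asserts that $\theta_t$ stays bounded inside it, and then negates the hypotheses of Theorem~\ref{thm:freezing}, Lemma~\ref{lem:conductance} and Theorems~\ref{theo:gradient_distortion}--\ref{theo:freezing_drift}. Your Steps 2 and 3 are fine once such a set exists. However, Step 1 does not produce one, and the obstruction is worse than the uniformity-in-$\theta$ issue you flag.

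\textbf{The $\theta$ component.} Forward invariance of $\{\|\theta-\theta^*\|_2\le\rho_\theta\}$ must hold for every $t\ge0$. The estimate $\|\theta_{t+k}-\theta_t\|=O(k\eta)$ only controls a finite horizon: shrinking $\eta$ lengthens the time $\theta_t$ spends in the ball but never makes it infinite. Infinite-horizon singular-perturbation or two-time-scale arguments need the slow CD dynamics to have $\theta^*$ as an asymptotically stable point. That is nowhere assumed, and the paper explicitly disclaims contraction of the CD operator. Theorem~\ref{thm:globalboundness} only traps $\theta_t$ in a ball of radius about $C/\psi_{\min}$, and the condition $\rho(J)<1$ at the single point $\theta^*$ says nothing about that ball.

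\textbf{The $(\lambda,c)$ component.} Invariance of $\{V\le\rho_z\}$ requires the flip-rate noise $\delta$ to be small relative to the basin. That is not a hypothesis: the bounded-noise assumption concerns $G_t$, not $\xi_t$. Bounded but non-small noise can eject the state of a merely locally stable nonlinear map. The paper's stochastic-boundedness proof has the same flaw, since it applies $\|J^t\|\le M\beta^t$ to the nonlinear recursion.

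\textbf{A repair that avoids localization.} Since $\det(J-I)=\alpha(1-\boldsymbol{\phi})$, $J$ has eigenvalue $1$ when $\boldsymbol{\phi}=1$, so $\rho(J)<1$ forces $\boldsymbol{\phi}<1$. (This incidentally rules out the pure-integral setting $\boldsymbol{\phi}=1$ used in the experiments.) Because $r_t,c_t\in[0,1]$, you get pathwise $|\lambda_{t+1}|\le\boldsymbol{\phi}|\lambda_t|+\eta_\lambda$. Hence $|\lambda_t|\le\bar\lambda:=\max\{|\lambda_0|,\eta_\lambda/(1-\boldsymbol{\phi})\}$ for all $t$, whatever $\theta_t$ and the noise do.

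Now assume strictly positive $\ell_2$ coefficients; state this as an explicit hypothesis, since the proposition does not. Then the recursion of Theorem~\ref{thm:globalboundness} leaves the ball $\{\|\theta\|_2\le R\}$ invariant for $R\ge\max\{\|\theta_0\|_2,\eta C/(1-\varrho)\}$. Consequently $\{\|\theta\|_2\le R\}\times[-\bar\lambda,\bar\lambda]\times[0,1]$ is forward invariant, $T_t\ge e^{-\bar\lambda}$, and $\beta^{\mathrm{field}}_t$ is bounded by a constant multiple of $Re^{\bar\lambda}$. Your Steps 2 and 3 then go through unchanged. You need no Lyapunov function, no time-scale separation, and no smallness of $\eta$ beyond $0<\eta\psi_{\max}<2$.

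One caveat remains: $T_t\ge e^{\lambda_t}$ needs $\kappa\overline{\Delta F}_t\ge0$. In Algorithm~\ref{alg:srtrbm}, $\Delta F_t$ is a signed difference whose Ces\`aro mean can be negative. You must therefore work with the absolute-value form $\bar{\mathcal R}_t$ or assume nonnegativity explicitly.
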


The analysis focuses on local regime stabilization and does not address global convergence. Instead, the proofs show that regulating internal temperature blocks the structural path that would lead toward thermodynamic degeneracy. This holds within the forward-invariant regime established by the controlled dynamics.

Next, we briefly review the relevant literature and then present the experimental results.

\section{Related Works}

Energy-based learning emerged from statistical mechanics and Markov chain sampling theory, in which learning is traditionally formulated under equilibrium deductions. Within this scheme, Boltzmann learning is based on the principle of equating the equilibrium statistics of the data and model distributions. Restricted Boltzmann Machines, however, are trained using finite-time Gibbs dynamics together with approximate gradient procedures such as Contrastive Divergence, which do not require the system to reach equilibrium at each update. The literature, therefore, reflects both the thermodynamic foundations of Boltzmann learning and the subsequent algorithmic evolution of RBM training methods. Table~\ref{tab:ebm_timeline} traces these conceptual and methodological developments and emphasizes the assumptions underlying the classical fixed-temperature RBM framework.

\begin{table}[H]
\centering
\caption{Conceptual lineage of energy-based models and sampling methods underlying RBMs.}
\label{tab:ebm_timeline}

\renewcommand{\arraystretch}{2.2}

\resizebox{\textwidth}{!}{%
\begin{tabular}{|l|l|l|}
\hline\
\textbf{Work} &
\textbf{Core contribution} &
\textbf{Relation to the classical RBM} \\

\shortstack[l]{Ising (1925)\\\cite{ising1925beitrag}} &
\shortstack[l]{Binary interacting variables described by an energy\\ function under thermal equilibrium.} &
\shortstack[l]{Provides the canonical mathematical form of\\ pairwise energy interactions adopted by RBMs.} \\

\shortstack[l]{Metropolis et al. (1953)\\\cite{metropolis1953equation}} &
\shortstack[l]{Markov Chain Monte Carlo sampling\\ from Boltzmann distributions.} &
\shortstack[l]{Defines the sampling principle underlying\\ Gibbs updates in RBM training.} \\

\shortstack[l]{Little (1974)\\\cite{little1974existence}} &
\shortstack[l]{Persistent collective neural states\\ emerge from network dynamics.} &
\shortstack[l]{Early evidence of stable collective configurations\\ analogous to low-energy RBM states.} \\

\shortstack[l]{Hopfield (1982)\\\cite{hopfield1982neural}} &
\shortstack[l]{Neural computation formulated as\\ energy minimization with attractors.} &
\shortstack[l]{Introduces explicit energy landscapes that\\ motivate stochastic generalizations such as RBMs.} \\

\shortstack[l]{Kirkpatrick et al. (1983)\\\cite{kirkpatrick1983optimization}} &
\shortstack[l]{Optimization via simulated annealing.} &
\shortstack[l]{Demonstrates the operational role of temperature\\ in navigating complex energy landscapes.} \\

\shortstack[l]{Geman \& Geman (1984)\\\cite{geman1984stochastic}} &
\shortstack[l]{Gibbs distributions and stochastic\\ relaxation for inference.} &
\shortstack[l]{Forms the probabilistic foundation of\\ Gibbs sampling used in RBMs.} \\

\shortstack[l]{Ackley et al. (1985)\\\cite{ackley1985boltzmann}} &
\shortstack[l]{Learning by matching equilibrium statistics\\ of data and model distributions.} &
\shortstack[l]{Establishes the theoretical learning objective\\ approximated by the classical fixed-temperature\\ RBM baseline.} \\

\shortstack[l]{Smolensky (1986)\\\cite{smolensky1986harmony}} &
\shortstack[l]{Harmony theory: learning as maximization\\ of a global energy-like objective.} &
\shortstack[l]{Conceptual precursor to Boltzmann Machines\\ and Restricted Boltzmann Machines.} \\

\shortstack[l]{Neal (1996)\\\cite{neal1996sampling}} &
\shortstack[l]{Tempered transitions for sampling\\ multimodal distributions.} &
\shortstack[l]{Points to limitations of fixed-temperature\\ MCMC when sampling multimodal energy terrains.} \\

\shortstack[l]{Roberts et al. (1997)\\\cite{roberts1997weak}} &
\shortstack[l]{Optimal scaling and convergence\\ of Metropolis algorithms.} &
\shortstack[l]{Builds theoretical insight into mixing behavior\\ relevant to RBM Gibbs sampling.} \\

\shortstack[l]{Jarzynski (1997)\\\cite{jarzynski1997nonequilibrium}} &
\shortstack[l]{Exact free-energy relations\\ far from equilibrium.} &
\shortstack[l]{Conceptually motivates consideration of nonequilibrium\\ effects in thermodynamic learning systems.} \\

\shortstack[l]{Crooks (1999)\\\cite{crooks1999entropy}} &
\shortstack[l]{Entropy production and nonequilibrium\\ fluctuation theorems.} &
\shortstack[l]{Provides theoretical insight into entropy production\\ within nonequilibrium statistical systems.} \\

\shortstack[l]{Hinton (2002)\\\cite{hinton2002cd}} &
\shortstack[l]{Contrastive Divergence for approximate\\ equilibrium gradient estimation.} &
\shortstack[l]{Defines the exact training algorithm used\\ in the classical fixed-temperature RBM baseline.} \\

\shortstack[l]{LeCun et al. (2006)\\\cite{lecun2006tutorial}} &
\shortstack[l]{Unified energy-based learning framework.} &
\shortstack[l]{Places RBMs within a broader class of\\ energy-based models.} \\

\shortstack[l]{Hinton et al. (2006)\\\cite{hinton2006dbn}} &
\shortstack[l]{Greedy layer-wise training of deep\\ belief networks.} &
\shortstack[l]{Extends RBMs as modular building blocks\\ for deep generative models.} \\

\shortstack[l]{Salakhutdinov \& Hinton (2009)\\\cite{salakhutdinov2009deep}} &
\shortstack[l]{Deep Boltzmann Machines with\\ multilayer interactions.} &
\shortstack[l]{Generalizes RBMs, amplifying equilibrium\\ and sampling challenges.} \\

\shortstack[l]{Desjardins et al. (2010)\\\cite{desjardins2010tempered}} &
\shortstack[l]{Parallel tempering for improved\\ RBM sampling.} &
\shortstack[l]{Demonstrates sensitivity of RBM behavior\\ to temperature control.} \\

\shortstack[l]{Hinton (2012)\\\cite{hinton2012practical}} &
\shortstack[l]{Practical training heuristics\\ for RBMs.} &
\shortstack[l]{Defines the standard experimental\\ configuration used as the classical baseline.} \\
\hline

\end{tabular}}
\end{table}

Though classical Boltzmann learning assumes access to equilibrium distributions, practical RBM training relies on finite-time Gibbs chains and approximate gradient procedures such as Contrastive Divergence. This distinction between equilibrium theory and operational sampling dynamics has motivated research on mixing behavior, tempering strategies, and convergence analysis. Table~\ref{tab:ebm_timeline} outlines the key contributions that formed the classical fixed-temperature RBM baseline and clarifies the thermodynamic assumptions on which it rests.

\section{Code-Level Experimental Protocol}

All experiments were conducted under Ubuntu Linux in a controlled CUDA-enabled GPU environment. Each training run was executed as a single isolated process on a dedicated GPU. This configuration ensured exclusive resource assignment and prevented cross-run interference. Random seeds were fixed at both CPU and GPU levels to guarantee full reproducibility. The implementation is provided as a standalone Python script (\texttt{srtrbm\_project0.py}), comprising model definition, thermodynamic control dynamics, Persistent Contrastive Divergence training, Annealed Importance Sampling (AIS), MCMC diagnostics, and automated statistical comparison routines. Temperature strategies are controlled via a single configuration argument (\texttt{fixed\_temperature}). This design confirms that all scientific comparisons isolate the temperature-regulation mechanism while maintaining identical architectural and optimization settings. The algorithmic components are device-agnostic at the tensor abstraction level. Although experiments were conducted on CUDA-enabled GPUs, adapting to alternative accelerator backends (e.g., XLA-based TPU systems) would primarily require modifications to the device abstraction layer rather than structural changes to the learning algorithm.

\subsection{SR-TRBM Algorithm}

\begin{algorithm}[H]
\caption{Self-Regulated Thermodynamic RBM (SR-TRBM)}
\label{alg:srtrbm}
\small

\KwIn{
Dataset $\mathcal{D}$,
epochs $E$,
Gibbs steps $K$,
learning rate $\eta$,
feedback gain $\eta_\lambda$,
flip smoothing $\alpha$,
macro scale $\kappa$
}

\KwInit{
Initialize $W \sim \mathcal{N}(0,\sigma^2)$\;
Initialize $b_v \leftarrow 0$, $b_h \leftarrow 0$\;
Initialize $\lambda_0 \leftarrow 0$, $c_0 \leftarrow 0$\;
Initialize $\bar{\Delta F}_0 \leftarrow 0$\;
Initialize persistent chain $v^{(0)} \sim \mathrm{Bernoulli}(0.5)$
}

\For{$t = 1$ \KwTo $E$}{

  \For{each mini-batch $v_{\text{data}}$}{

    $T_t \leftarrow e^{\lambda_t} + \kappa \bar{\Delta F}_t$\;

    $h_{\text{data}} \sim \sigma((v_{\text{data}} W + b_h)/T_t)$\;

    Store $v_{\text{old}} \leftarrow v^{(0)}$\;

    \For{$k = 1$ \KwTo $K$}{
      $h^{(k)} \sim \sigma((v^{(k-1)} W + b_h)/T_t)$\;
      $v^{(k)} \sim \sigma((h^{(k)} W^\top + b_v)/T_t)$\;
    }

    $v^{(0)} \leftarrow v^{(K)}$\;

    $r_t \leftarrow \frac{1}{n_v} \| v^{(K)} - v_{\text{old}} \|_0$\;

    $\Delta W \leftarrow 
    \frac{1}{B}
    (v_{\text{data}}^\top h_{\text{data}}
    - v^{(K)\top} h^{(K)})
    - \lambda_w W$\;

    $W \leftarrow W + \eta \Delta W$\;

    Update $b_v, b_h$ analogously\;

    $\Delta F_t \leftarrow
    \mathbb{E}_{\text{data}}[F_{T_t}(v)]
    - \mathbb{E}_{\text{model}}[F_{T_t}(v)]$\;
  }

  $c_{t+1} \leftarrow (1-\alpha)c_t + \alpha r_t$\;

  $\lambda_{t+1}
  \leftarrow
  \lambda_t
  - \eta_\lambda (r_t - c_t)$\;

  $\bar{\Delta F}_{t+1}
  \leftarrow
  \bar{\Delta F}_t
  +
  \frac{1}{t}
  (\Delta F_t - \bar{\Delta F}_t)$\;
}
\end{algorithm}

At the beginning of training, the thermodynamic condition is initialized as $\lambda_0 = 0$ and $\bar{\Delta F}_0 = 0$; accordingly, the initial temperature is $T_0 = 1$. After completion of epoch $t$, the updated thermodynamic state $(\lambda_{t+1}, \bar{\Delta F}_{t+1})$ is propagated to the next epoch. The sampling temperature is not reset between epochs; instead, it evolves continuously as an endogenous state variable. Each epoch inherits the effective temperature from the previous one, and the temperature is expected to be transformed by the RBM's feedback dynamics.

\subsection{Code Availability and Reproducibility}

The complete source code used to generate all experimental results is publicly available on Zenodo (\href{https://doi.org/10.5281/zenodo.18778493}{10.5281/zenodo.18778493}), with the corresponding development repository hosted on GitHub (\href{https://github.com/cagasolu/sr-trbm}{github.com/cagasolu/sr-trbm}). The implementation includes deterministic seed control, logging of thermal diagnostics, and full AIS evaluation routines to enable independent replication of all reported metrics.

\subsection{Dataset and Preprocessing}

The proposed model is evaluated on the MNIST dataset (OpenML v1), consisting of 60,000 training samples and 10,000 test samples of handwritten digits (28\,$\times$\,28 grayscale images). Pixel intensities were normalized to the interval $[0,1]$ and subsequently binarized using a threshold at $0.5$:

\begin{equation}
x > 0.5 \rightarrow 1,
\qquad
x \le 0.5 \rightarrow 0.
\end{equation}

This preprocessing secures agreement with the Bernoulli–Bernoulli RBM formulation used throughout the study. On the other hand, in the experiments, bias decay was set to $0$ as part of the general optimization configuration.

\subsection{Model Design and Training Configuration}

All compared models share an identical architecture and optimization configuration:

\begin{itemize}[leftmargin=*, itemsep=2pt, label=--]
\item Visible units: 784
\item Hidden units: 512
\item Weight initialization: $\mathcal{N}(0, 0.05)$
\item Optimization algorithm: Persistent Contrastive Divergence (PCD-1)
\item Learning rate: $5 \times 10^{-4}$
\item Batch size: 128
\item Training epochs: 400
\item Weight decay: $10^{-4}$
\end{itemize}

Persistent negative-phase chains were maintained across minibatches to stabilize gradient estimation. The only difference among experimental conditions lies in the temperature control strategy:

\begin{itemize}[leftmargin=*, itemsep=2pt, label=--]
\item Fixed $\langle T = 1 \rangle$
\item Frozen $\langle T = T^{*} \rangle$ (manually tuned constant)
\item Adaptive (Self-Regulated Thermodynamic RBM)
\end{itemize}

In the adaptive setting, temperature evolves via a coupled micro–macro feedback mechanism that combines flip-rate regulation and Ces\`aro-averaged free-energy gap tracking. No manual model parameter setting was performed to match likelihood outcomes; temperature dynamics develop endogenously from the feedback process.

\subsection{Partition Function Estimation}

Log-partition values were estimated using fine-grained Annealed Importance Sampling (AIS) with:

\begin{itemize}[leftmargin=*, itemsep=2pt, label=--]
\item 3900 independent AIS chains
\item 7200 intermediate inverse-temperature distributions
\end{itemize}

This fine-grained annealing schedule was picked to ensure stable normalization estimates and reliable computation of Effective Sample Size (ESS).

\subsection{Assessment Criteria}

Model comparison is conducted along three complementary axes:

\begin{itemize}[leftmargin=*, itemsep=2pt, label=--]
\item \textbf{Likelihood-based evaluation:} Test log-likelihood (AIS-normalized) and pseudo-likelihood.
\item \textbf{Sampling efficiency diagnostics:} AIS Effective Sample Size (ESS), AIS log-weight variance, MCMC integrated autocorrelation time, and MCMC ESS.
\item \textbf{Generative characteristics:} Reconstruction error (MSE), pixel entropy, pairwise Hamming diversity, and the empirical mean image.
\end{itemize}

All models were trained under identical architectural and optimization settings. Consequently, observed performance differences are attributable directly to the thermodynamic regulation mechanism rather than to auxiliary hyperparameter effects.

\subsection{Empirical Findings}

Extra seed-level statistics, image creation and the adaptive setting graphs are registered in Appendix (\ref{app:seed_results}) and (\ref{app:image_results}), respectively.

\begin{table}[H]
\centering
\caption{Mean Performance Across Temperature Strategies (Higher log-likelihood and AIS ESS indicate better generative modeling and normalization quality.)}
\label{tab:mean-performance}
\fontsize{10pt}{12pt}\selectfont
\begin{tabular*}{\textwidth}{@{\extracolsep{\fill}}lrrr}
\toprule
Model & Test Log-Likelihood ($\uparrow$) & Recon. MSE ($\downarrow$) & AIS ESS ($\uparrow$) \\
\midrule
Fixed $\langle T\!=\!1\rangle$ 
& -714.29 $\pm$ 16.13 
& 0.016797 
& 65.23 \\

Fixed $\langle T\!=\!T^{*}\rangle$ 
& -689.39 $\pm$ 8.41 
& 0.016093 
& 65.82 \\

Adaptive 
& \textbf{-684.56 $\pm$ 9.22} 
& \textbf{0.016073} 
& \textbf{310.97} \\

\bottomrule
\end{tabular*}
\end{table}

Table~\ref{tab:mean-performance} summarizes the mean performance among temperature strategies. The Adaptive model obtains the highest test log-likelihood and the lowest reconstruction error among all configurations. More notably, it yields a substantially larger AIS Effective Sample Size (ESS), signifying enhanced normalization stability and more reliable partition function estimation. While reconstruction differences are numerically small, the gain in ESS suggests that adaptive thermodynamic regulation primarily enhances sampling efficiency instead of merely improving surface-level reconstruction precision.

\begin{table}[H]
\centering
\caption{Effect Size Analysis Across Temperature Strategies (Cohen's $d$)}
\label{tab:effect-sizes}
\fontsize{10pt}{12pt}\selectfont
\begin{tabular*}{\textwidth}{@{\extracolsep{\fill}}lrrr}
\toprule
Comparison 
& Test LL $d$ 
& Recon. MSE $d$ 
& Log AIS ESS $d$ \\
\midrule

Adaptive $\succ$ $\langle T\!=\!1\rangle$
& \textbf{1.47} 
& \textbf{12.4} 
& \textbf{2.11} \\

Adaptive $\succ$ $\langle T\!=\!T^{*}\rangle$ 
& 0.34     
& 0.21 
& \textbf{1.01} \\

$\langle T\!=\!T^{*}\rangle$ $\succ$ $\langle T\!=\!1\rangle$
& 1.87 
& 10.1 
& 0.43 \\

\bottomrule
\end{tabular*}
\end{table}

Table~\ref{tab:effect-sizes} reports standardized effect sizes computed using Cohen’s $d$ \cite{cohen1988statistical}. The largest effects are observed in Log AIS ESS, where the Adaptive strategy demonstrates great improvements over both fixed-temperature baselines. In contrast, effect sizes for reconstruction error are comparatively small, supporting the interpretation that the primary benefit of adaptive regulation lies in improved normalization dynamics instead of reconstruction exactness.

\begin{table}[H]
\centering
\caption{Bayesian Bootstrap Comparison on Log AIS ESS (ROPE defined as $\pm 2$ in log ESS space.)}
\label{tab:bayesian-bootstrap}
\fontsize{10pt}{12pt}\selectfont
\begin{tabular*}{\textwidth}{@{\extracolsep{\fill}}lrrr}
\toprule
Metric 
& Adaptive $\succ$ $\langle T\!=\!1\rangle$ 
& Adaptive $\succ$ $\langle T\!=\!T^{*}\rangle$ 
& $\langle T\!=\!T^{*}\rangle$ $\succ$ $\langle T\!=\!1\rangle$ \\
\midrule

$\Delta$ Mean 
& 3.01 
& 1.98 
& 1.03 \\

SD 
& 0.50 
& 0.69 
& 0.85 \\

95\% HDI 
& [2.05, 3.94] 
& [0.73, 3.37] 
& [-0.62, 2.62] \\

$P(\Delta>0)$ 
& 1.000 
& 1.000 
& 0.876 \\

ROPE ($|\Delta|<2$) 
& 0.031 
& 0.543 
& 0.865 \\

$BF_{10}$ 
& $6.5\times10^{7}$ 
& 61 
& 2.08 \\

Fold $\exp(\Delta)$ 
& $20.3\times$ 
& $7.2\times$ 
& $2.8\times$ \\

\bottomrule
\end{tabular*}
\end{table}

Table~\ref{tab:bayesian-bootstrap} summarizes the results of Bayesian bootstrap comparisons \cite{rubin1981bayesian}, where Bayesian evidence is determined by Bayes factors \cite{kass1995bayes} and practical equivalence is evaluated using the ROPE framework \cite{kruschke2013bayesian}. The Adaptive configuration is associated with overwhelming Bayesian evidence relative to the fixed $T=1$ baseline and with strong evidence in comparison to the tuned $T^{*}$ strategy, particularly in terms of normalization efficiency. The low ROPE probabilities further suggest that these improvements are unlikely to be practically negligible.

\section{Summary of Contributions}

When we compare the two distinct experiments in Table~\ref{tab:rbm_comparison}, the limitations of the classical RBM become more apparent. The classical formulation implicitly assumes that sampling takes place in a valid regime throughout training. It applies updates directly, regardless of how the sampler is actually performing at that moment. This observation suggests that reconstruction performance alone may not reliably reflect the stochastic consistency of the sampler.

\begin{table}[H]
\centering
\fontsize{9}{12}\selectfont
\caption{Comparison between the classical RBM and the proposed self-regulated RBM.}
\label{tab:rbm_comparison}
\renewcommand{\arraystretch}{1.25}

\adjustbox{max width=\textwidth}{
\begin{tabular}{l l l}
\toprule
{Aspect} &
{Classical RBM} &
{Self-Regulated RBM} \\
\midrule

Model interpretation
& Static energy-based model
& Dynamic, feedback-controlled stochastic system \\

Temperature ($T$)
& Fixed external hyperparameter
& Endogenous state variable with adaptive dynamics \\

Thermal equilibrium
& Implicitly assumed
& Explicitly monitored and operationally maintained \\

Gibbs sampling
& Unregulated alternating Gibbs updates
& Sampler-aware Gibbs dynamics \\

Sampler diagnostics
& Not defined
& Flip-rate and energy-fluctuation monitoring \\

Mixing behavior
& Uncontrolled and problem-dependent
& Actively regulated through feedback \\

Learning updates
& Applied uniformly at each iteration
& Applied under regulated stochastic regime \\

Learning stability
& Sensitive to $T$ and CD-$k$
& Designed to promote improved stabilization during training \\

Temperature scheduling
& Fixed or manually annealed
& Automatically adjusted via sampler feedback \\

Energy consistency
& Not explicitly enforced
& Energy-consistent reconstruction dynamics \\

Reconstruction mechanism
& Mean-field or heuristic mapping
& Deterministic, energy-consistent mapping \\

Failure modes
& Freezing or excessive stochasticity
& Dynamically corrected during training \\

Physical interpretation
& Idealized thermodynamic analogy
& Operational thermodynamic system \\

Role of RBM
& Approximate generative model
& Controlled stochastic physical system \\

Extensibility
& Limited by fixed dynamics
& Naturally extensible to deeper and adaptive models \\

\bottomrule
\end{tabular}
}
\end{table}

\section{Discussion and Conclusion}
Conceptually, the proposed framework reframes finite-time Gibbs training as a regulated non-equilibrium dynamical process rather than a static equilibrium approximation. In this view, the stochastic regime is explicitly monitored and adaptively controlled through thermodynamic feedback. We contend that the resulting regulatory principle may extend to wider categories of energy-based models trained with short-run stochastic approximations. In this context, we regard the Restricted Boltzmann Machine as a canonical, mathematically tractable setting for examining thermodynamic mechanisms in deep generative models.

Our study shows that fixed-temperature finite-time Gibbs training can induce thermodynamic degeneracy and, in the absence of sufficient regularization, drive linear parameter drift. Accordingly, structural stability under fixed-temperature finite-time Gibbs training remains inherently trajectory-dependent in nonconvex energy-based models.

The proposed framework manages this instability by elevating temperature from an external hyperparameter to an endogenous dynamical state variable. The resulting closed-loop system couples parameter evolution, stochastic transition activity, and thermodynamic feedback. Under standard local Lipschitz assumptions and a two-time-scale separation regime, the thermodynamic subsystem admits a locally exponentially stable operating point.

The stabilization system operates at both microscopic and macroscopic scales within the analyzed regime. The microscopic feedback rule regulates the empirical flip rate, guaranteeing sustained stochastic mixing, while the macroscopic Cesàro-averaged free-energy correction constrains cumulative energy imbalance and limits long-horizon thermodynamic drift. Together, these components obstruct the pathway from effective-field growth to conductance collapse and freezing-induced parameter drift within the regulated regime.

Empirically, adaptive thermodynamic regulation substantially improves normalization stability. The gains in ESS indicate that the primary benefit does not lie in reconstruction precision per se, but in enhanced sampling reliability and more stable partition-function estimation, consistent with recent examinations of short-run MCMC training in modern energy-based models \cite{nijkamp2020anatomy}.

From this perspective, thermal equilibrium is treated not as a fixed assumption but as an operational condition that may be locally maintained through feedback. This interpretation bridges finite-time Gibbs training \cite{lecun2022path} with dynamical-systems views of neural network equilibrium \cite{haber2018stable}.

While global asymptotic convergence is not claimed, the analysis shows a structural mechanism that increases robustness during finite-time stochastic training. The freezing phenomenon develops from the existence of admissible diverging trajectories, which preclude trajectory-independent structural guarantees in nonconvex energy-based models. Subsequent research should characterize conditions for global stability of the coupled parameter–temperature dynamics, plus extend thermodynamic regulation to deeper, continuous-state energy-based architectures.

\bibliographystyle{plain}

\appendix
\section{Seed-Level Experimental Results}
\label{app:seed_results}

This appendix provides complete seed-level performance measures and descriptive statistics for all temperature-control strategies evaluated in the main experiments. While the main text reports aggregate comparisons and statistical analyses, the present section documents cross-seed variability in likelihood, reconstruction precision, effective inverse temperature, and normalization diagnostics. These results assert full transparency of experimental behavior throughout independent random initializations. In the experimental execution, we set $\boldsymbol{\phi} = 1$, corresponding to pure integral feedback. Although the theoretical evaluation grants $0 < \boldsymbol{\phi} \le 1$, empirical stability was achieved through sufficiently small feedback gain $\eta_\lambda$.

\subsection{SR-TRBM Adaptive Setting}

Table~\ref{tab:adaptive_seed} reports individual seed-level metrics for the adaptive thermodynamic configuration. Across random initializations, the effective inverse temperature is tightly concentrated, showing stable regulation of the stochastic regime. In contrast, normalization diagnostics such as the AIS Effective Sample Size (ESS) exhibit moderate dispersion, reflecting variability in the quality of partition-function estimation, while staying substantially higher on average than fixed-temperature baselines.

\begin{table}[H]
\centering
\caption{Adaptive Temperature Results Across Seeds}
\label{tab:adaptive_seed}
\fontsize{10pt}{12pt}\selectfont
\begin{tabular*}{\textwidth}{@{\extracolsep{\fill}}c cc ccc}
\toprule
Seed 
& Test Log-Likelihood 
& Recon. MSE 
& $\beta^{\mathrm{norm}} = \|\theta\|/T$
& AIS ESS 
& AIS Log-Var \\
\midrule
1   & -675.912 & 0.015972 & 71.508 & 651.00 & 2.31 \\
2   & -684.010 & 0.016050 & 71.522 & 97.00 & 2.29 \\
3   & -685.915 & 0.016101 & 71.462 & 123.00 & 2.35 \\
17  & -681.731 & 0.016088 & 71.511 & 472.00 & 2.27 \\
128  & -705.228 & 0.016153 & 71.420 & 6.00  & 2.60 \\
360  & -680.108 & 0.016100 & 71.391 & 541.00 & 2.24 \\
1000 & -679.030 & 0.016045 & 71.529 & 281.78 & 2.33 \\
\bottomrule
\end{tabular*}
\end{table}

Table~\ref{tab:adaptive_desc} summarizes descriptive statistics across seeds. The adaptive configuration shows lower variance in likelihood and reconstruction error than the fixed $T=1$ baseline, along with a significantly higher mean AIS ESS. This behavior supports the claim that endogenous temperature control enhances normalization stability without damaging reconstruction precision.

\begin{table}[H]
\centering
\caption{Descriptive Statistics for Adaptive Temperature}
\label{tab:adaptive_desc}
\fontsize{10pt}{12pt}\selectfont
\begin{tabular*}{\textwidth}{@{\extracolsep{\fill}}lrrl}
\toprule
Metric & Mean & Std & 95\% CI \\
\midrule
Test Log-Likelihood & -684.56 & 9.22 & [-693.08, -676.04] \\
Reconstruction MSE & 0.016073 & 0.000060 & [0.016017, 0.016129] \\
Effective $\beta$  & 71.478 & 0.054 & [71.43, 71.53] \\
AIS ESS       & 310.97 & 247.61 & [20.7, 336.5] \\
Log AIS ESS     & 4.43  & 1.51  & [3.03, 5.82] \\
\bottomrule
\end{tabular*}
\end{table}

\subsection{SR-TRBM Fixed Setting}

Table~\ref{tab:fixed1_seed} puts forth seed-level metrics for the classical fixed-temperature setting with $T=1$. Compared with the adaptive configuration, the fixed-temperature regime exhibits greater dispersion in AIS ESS and AIS log-weight variance.

\begin{table}[H]
\centering
\caption{Seed-Level Performance Metrics for Fixed $\langle T\!=\!1\rangle$}
\label{tab:fixed1_seed}
\fontsize{10pt}{12pt}\selectfont
\begin{tabular*}{\textwidth}{@{\extracolsep{\fill}}c cc ccc}
\toprule
Seed 
& Test Log-Likelihood 
& Recon. MSE 
& $\beta^{\mathrm{norm}} = \|\theta\|/T$
& AIS ESS 
& AIS Log-Var \\
\midrule
1   & -693.716 & 0.016702 & 68.988 & 420.71 & 2.18 \\
2   & -702.772 & 0.016771 & 69.010 & 11.52 & 2.89 \\
3   & -716.324 & 0.016807 & 68.981 & 2.38  & 7.83 \\
17  & -720.409 & 0.016832 & 69.050 & 7.33  & 9.51 \\
128  & -744.530 & 0.016856 & 68.934 & 1.04  & 23.79 \\
360  & -701.154 & 0.016820 & 68.870 & 9.55  & 2.72 \\
1000 & -721.102 & 0.016790 & 69.039 & 4.06  & 11.64 \\
\bottomrule
\end{tabular*}
\end{table}

Descriptive statistics in Table~\ref{tab:fixed1_desc} further illustrate this instability. The higher standard deviation in log-likelihood and the large variance in AIS ESS reflect the structural weakness of fixed-temperature finite-time training identified in the theoretical evaluation.

\begin{table}[H]
\centering
\caption{Descriptive Statistics for Fixed $\langle T\!=\!1\rangle$}
\label{tab:fixed1_desc}
\fontsize{10pt}{12pt}\selectfont
\begin{tabular*}{\textwidth}{@{\extracolsep{\fill}}lrrl}
\toprule
Metric & Mean & Std & 95\% CI \\
\midrule
Test Log-Likelihood & -714.29 & 16.13 & [-729.19, -699.38] \\
Reconstruction MSE & 0.016797 & 0.000050 & [0.016751, 0.016843] \\
Effective $\beta$  & 68.982 & 0.065 & [68.92, 69.04] \\
AIS ESS       & 65.23 & 155.05 & [1.80, 63.5] \\
Log AIS ESS     & 2.37  & 1.97  & [0.59, 4.15] \\
\bottomrule
\end{tabular*}
\end{table}

\subsection{SR-TRBM Frozen Setting}

Table~\ref{tab:frozen_seed} reports seed-level results for the tuned constant-temperature configuration. While performance improves relative to $T=1$, cross-seed variability in normalization diagnostics remains substantial. The effective inverse temperature is the same by construction, yet AIS ESS does not consistently reach the levels achieved by the adaptive strategy.

\begin{table}[H]
\centering
\caption{Fixed $\langle T \!=\!T^{*}\rangle$ Results Across Seeds}
\label{tab:frozen_seed}
\fontsize{10pt}{12pt}\selectfont
\begin{tabular*}{\textwidth}{@{\extracolsep{\fill}}c cc ccc}
\toprule
Seed 
& Test Log-Likelihood 
& Recon. MSE 
& $\beta^{\mathrm{norm}} = \|\theta\|/T$
& AIS ESS 
& AIS Log-Var \\
\midrule
1   & -682.163 & 0.016007 & 71.708 & 102.08 & 2.78 \\
2   & -707.188 & 0.016116 & 71.744 & 1.40  & 11.22 \\
3   & -686.139 & 0.016073 & 71.645 & 114.58 & 2.79 \\
17  & -688.951 & 0.016093 & 71.729 & 114.17 & 2.47 \\
128  & -687.254 & 0.016172 & 71.606 & 7.72  & 3.57 \\
360  & -692.620 & 0.016117 & 71.599 & 3.06  & 3.44 \\
1000 & -681.446 & 0.016071 & 71.721 & 113.74 & 2.97 \\
\bottomrule
\end{tabular*}
\end{table}

Table~\ref{tab:frozen_desc} presents the corresponding descriptive statistics. While tuning the temperature reduces the likelihood dispersion relative to $T=1$, normalization quality still falls short of the adaptive thermodynamic regime. Static temperature tuning, therefore, does not fully replace endogenous stochastic regulation.

\begin{table}[H]
\centering
\caption{Descriptive Statistics for Fixed $\langle T\!=\!T^*\rangle$}
\label{tab:frozen_desc}
\fontsize{10pt}{12pt}\selectfont
\begin{tabular*}{\textwidth}{@{\extracolsep{\fill}}lrrl}
\toprule
Metric & Mean & Std & 95\% CI \\
\midrule
Test Log-Likelihood & -689.39 & 8.41 & [-697.17, -681.61] \\
Reconstruction MSE & 0.016093 & 0.000060 & [0.016037, 0.016149] \\
Effective $\beta$  & 71.68 & 0.06 & [71.63, 71.73] \\
AIS ESS       & 65.82 & 55.30 & [6.05, 97.6] \\
Log AIS ESS     & 3.19 & 1.87 & [1.80, 4.58] \\
\bottomrule
\end{tabular*}
\end{table}

\section{Image-Level Experimental Results}
\label{app:image_results}

To further evaluate generative characteristics beyond likelihood and normalization metrics, we report pixel entropy, pairwise Hamming diversity, and the mean $\ell_2$ distance of synthesized samples. These measures are computed for representative seeds throughout all temperature-control strategies. Representative image generations together with the adaptive-regime thermodynamic diagnostics for Seed~1 are shown in Figures~\ref{fig:fixed_seed1} and~\ref{fig:energy_dynamics}.

\begin{table}[H]
\centering
\caption{Generative diagnostics for selected seeds under Adaptive settings, Fixed $\langle T=1\rangle$, and Frozen $\langle T=T^*\rangle$ settings.}
\label{tab:generative_diagnostics_full}
\small
\begin{tabular*}{\textwidth}{@{\extracolsep{\fill}}lcccc}
\toprule
Setting & Seed & Pixel Entropy & Diversity & Mean $\ell_2$ \\
\midrule
Adaptive & 1  & 0.2948 & 0.1974 & 2.5160 \\
Adaptive & 128 & 0.2537 & 0.1646 & 1.3146 \\
\midrule
Fixed $\langle T=1\rangle$ & 1  & 0.2547 & 0.1661 & 1.9007 \\
Fixed $\langle T=1\rangle$ & 128 & 0.2540 & 0.1651 & 1.8280 \\
\midrule
Frozen $\langle T=T^*\rangle$ & 1  & 0.2642 & 0.1737 & 1.9898 \\
Frozen $\langle T=T^*\rangle$ & 128 & 0.2082 & 0.1288 & 2.4859 \\
\bottomrule
\end{tabular*}
\end{table}

Across strategies, the adaptive configuration displays the highest entropy and diversity for Seed 1, indicating stronger stochastic exploration. The fixed $T=1$ regime produces tightly clustered entropy and diversity values across seeds, implying a more constrained generative regime. The frozen temperature setting shows asymmetric behavior: while Seed 1 achieves moderate entropy and diversity, Seed 128 displays noticeably lower entropy, indicating sensitivity upon initialization. These observations are consistent with the conceptual analysis, which predicts that endogenous thermodynamic regulation stabilizes stochastic activity while preserving controlled exploration.

\clearpage

\begin{figure}[!t]
\centering
\includegraphics[width=1.0\textwidth]{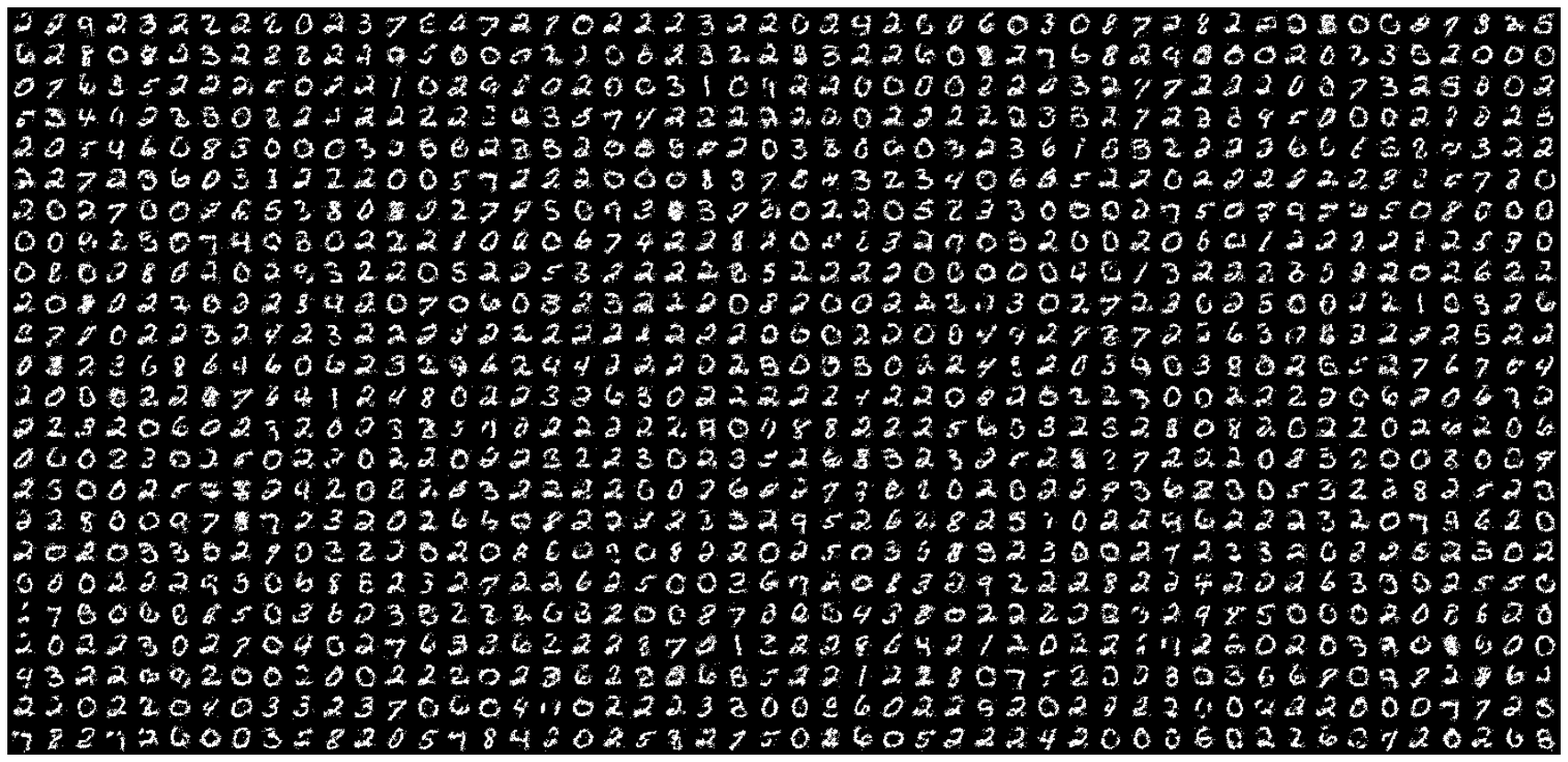}
\caption{Adaptive temperature-control regime (Seed 1).
\label{fig:adaptive_seed1}
Samples were obtained after 6000 steps of ensemble Gibbs sampling.
The generated configurations organize along a coherent correlation manifold. Digits containing multiple interdependent stroke structures show clearer structural persistence, signifying improved modeling of higher-order associative patterns within the regulated stochastic regime.}
\end{figure}

\clearpage

\begin{figure}[!t]
\centering
\includegraphics[width=1.0\textwidth]{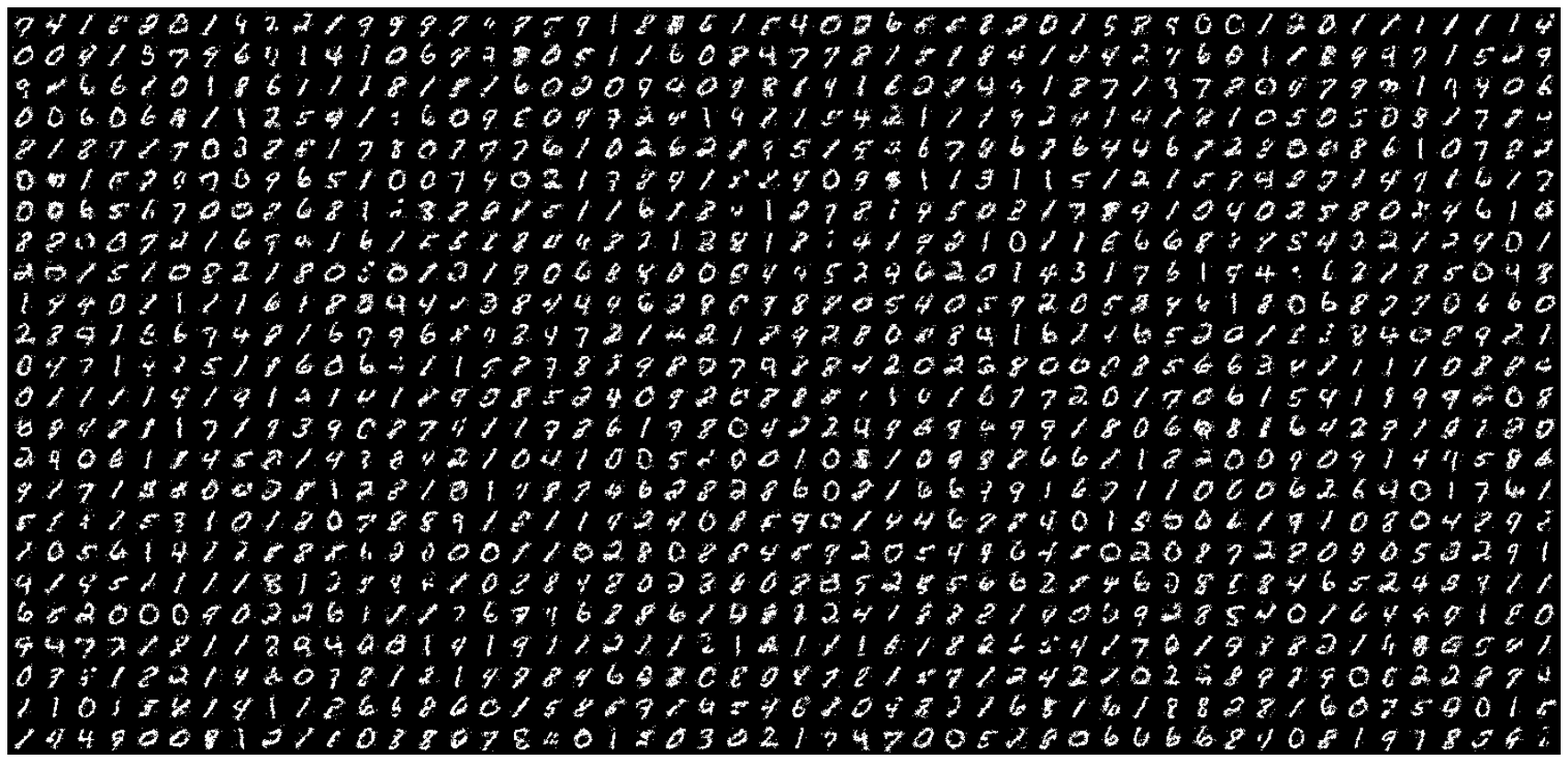}
\caption{Fixed temperature setting ($T=1$, Seed 1).
\label{fig:fixed_seed1}
Samples were obtained after 6000 steps of ensemble Gibbs sampling.
The distribution displays comparatively constrained generative diversity, with compressed correlation geometry and reduced variation within correlated stroke structures.}
\end{figure}

\clearpage

\begin{figure}[!t]
\centering
\includegraphics[width=1.0\textwidth]{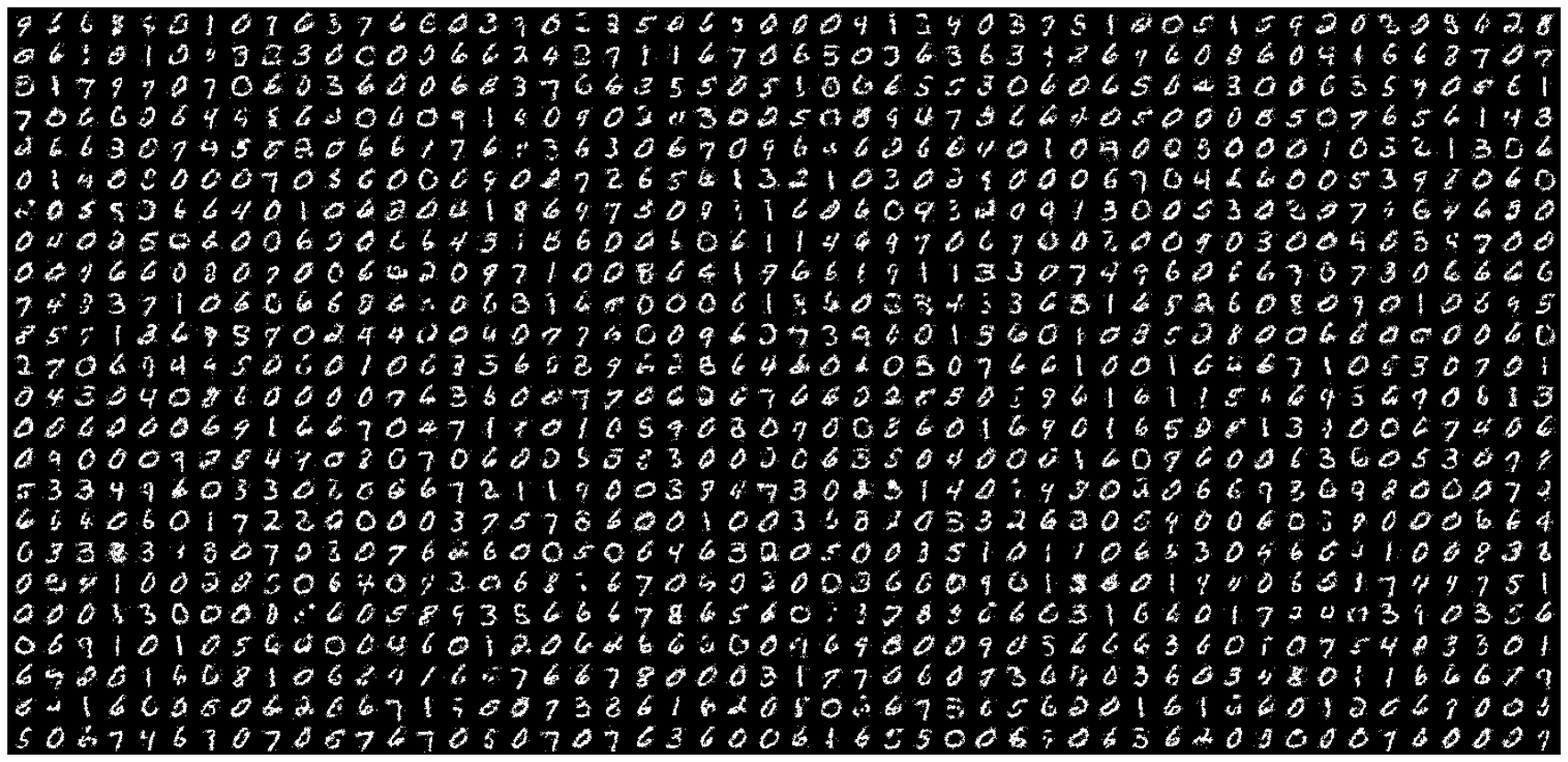}
\caption{Frozen temperature setting ($T = T^{\ast}$, Seed 1).
\label{fig:frozen_seed1}
Samples were obtained after 6000 steps of ensemble Gibbs sampling.
The generative configurations display sensitivity to the initialization and to the selection of fixed temperature, reflecting unstable correlation alignment and reduced architectural robustness.}
\end{figure}

\clearpage

\thispagestyle{empty}

\begin{figure}[p]
\centering
\includegraphics[width=\textwidth]{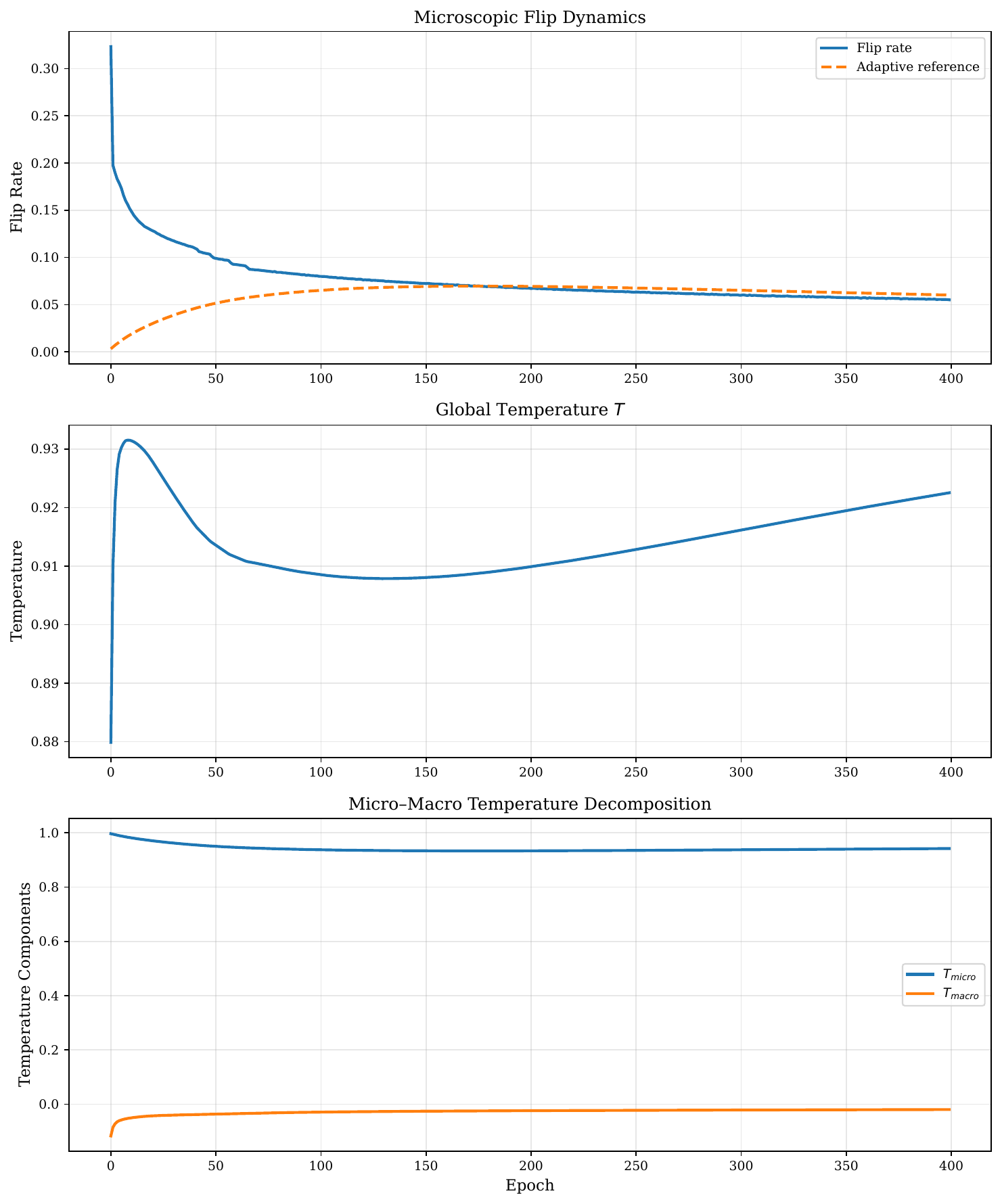}
\caption{
Microscopic flip dynamics and temperature evolution under endogenous regulation, according to adaptive setting, Seed is 1 for this graph and Figure~\ref{fig:energy_dynamics}. Top: flip rate with adaptive reference, illustrating progressive suppression of stochastic transitions. Middle: global temperature trajectory. Bottom: micro--macro temperature decomposition. The decay in flip activity reflects the controlled thermodynamic stabilization mechanism.}
\label{fig:core_dynamics}
\end{figure}

\clearpage

\thispagestyle{empty}

\begin{figure}[p]
\centering
\includegraphics[width=\textwidth]{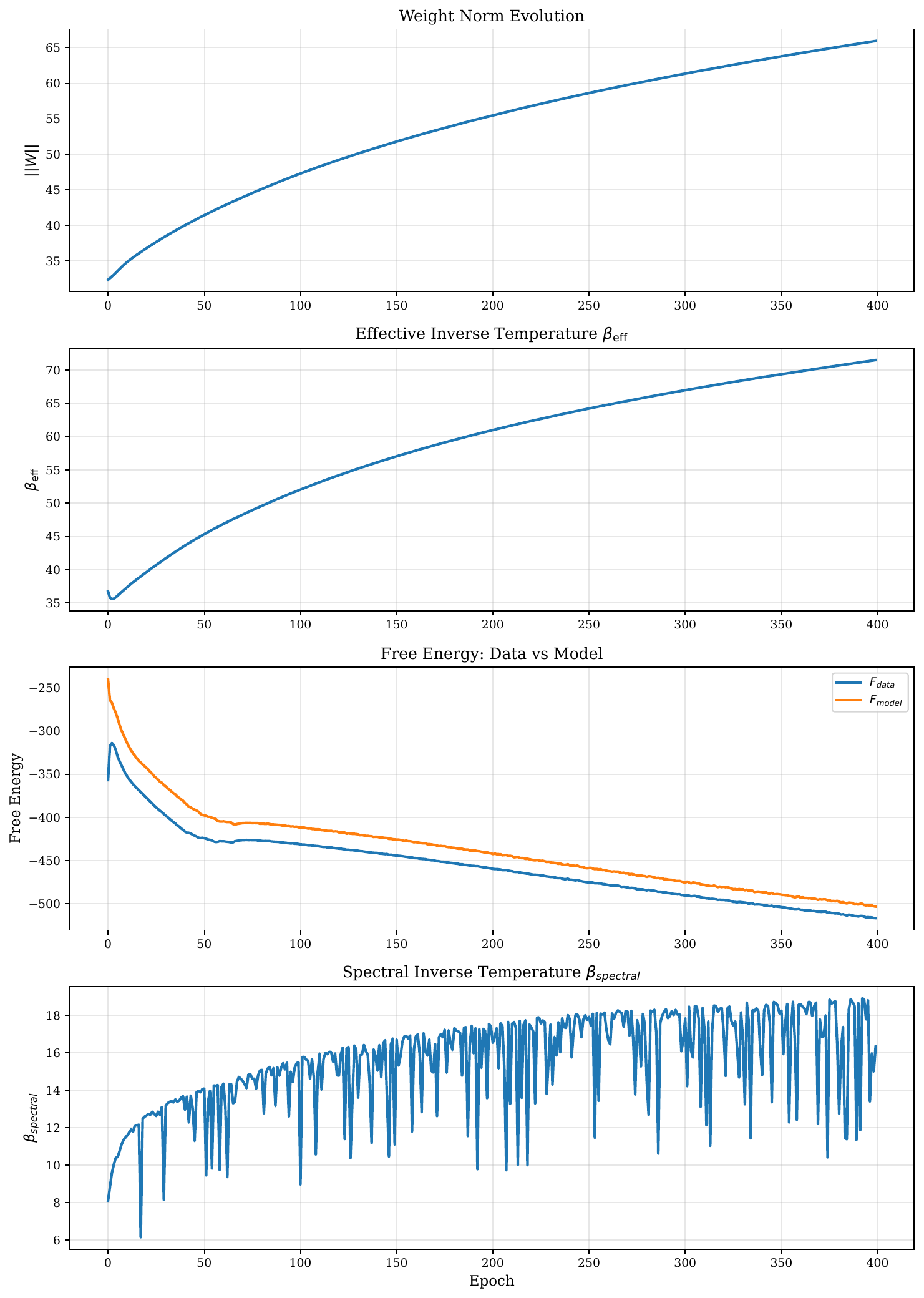}
\caption{
Energy-based diagnostics during training.
Top: weight norm evolution.
Second: effective inverse temperature $\beta_{\mathrm{eff}} = \|W_t\|_{F} / T_t$, where $\|\cdot\|_{F}$ denotes the Frobenius norm, capturing the macroscopic scaling of the energy landscape relative to temperature. This quantity is distinct from the field-level inverse temperature 
$\beta^{\mathrm{field}}$ appearing in Theorem~\ref{thm:freezing}. Third: free energy comparison between data and model. Bottom: spectral inverse temperature $\beta_{\mathrm{spectral}} = \|W_t\|_{2} / T_t$, where $\|\cdot\|_{2}$ denotes the spectral norm (largest singular value).}
\label{fig:energy_dynamics}
\end{figure}

\end{document}